\newtheorem{theorem}{Theorem}
\newtheorem{lemma1}[theorem]{Lemma}
\newtheorem{definition}[theorem]{Definition}
\newtheorem{proposition}[theorem]{Proposition}
\newtheorem{remark}{Remark}[theorem]
\newcommand\corrauthor[1]{%
  \begingroup
  \renewcommand\thefootnote{}\footnote{#1}%
  \addtocounter{footnote}{-1}%
  \endgroup
}
\begin{document}

\begin{center}
\begin{spacing}{1.5}
\textbf{\Large The Joint Gromov Wasserstein Objective for Multiple Object Matching}
\end{spacing}

\vspace{5mm}
Aryan Tajmir Riahi$^{1}$, and Khanh Dao Duc$^{1,2,*}$ \corrauthor{$^*$ corresponding author: kdd@math.ubc.ca}

\vspace{5mm}
$^{1}$ Department of Computer Science, University of British Columbia, Vancouver, BC V6T 1Z4, Canada\\
$^{2}$ Department of Mathematics, University of British Columbia, Vancouver, BC V6T 1Z4, Canada\\

\vspace{5mm}
\vspace{5mm}
\end{center}

\begin{abstract}
  The Gromov-Wasserstein (GW) distance serves as a powerful tool for matching objects in metric spaces. However, its traditional formulation is constrained to pairwise matching between single objects, limiting its utility in scenarios and applications requiring multiple-to-one or multiple-to-multiple object matching. In this paper, we introduce the Joint Gromov-Wasserstein (JGW) objective and extend the original framework of GW to enable simultaneous matching between collections of objects. Our formulation provides a non-negative dissimilarity measure that identifies partially isomorphic distributions of mm-spaces, with point sampling convergence. We also show that the objective can be formulated and solved for point cloud representations by adapting traditional algorithms in Optimal Transport, including entropic regularization. Our benchmarking with other variants of GW for partial matching indicates superior performance in accuracy and computational efficiency of our method, while experiments on both synthetic and real-world datasets show its effectiveness for multiple shape matching, including geometric shapes and biomolecular complexes, suggesting promising applications for solving complex matching problems across diverse domains, including computer graphics and atomic model building for structural biology.

\end{abstract}

\section*{Introduction}\label{sec:intro}
Finding correspondence between two objects is a central problem in computer science with applications such as shape interpolation and texture transfer in computer vision~\cite{sahilliouglu2020recent}, account linking across social networks~\cite{zhang2015multiple}, or protein structure analysis  ~\cite{riahi2023alignot, riahi2025alignment}. Despite extensive research on object matching across these domains, existing methods mostly address full-to-full shape matching~\cite{sahilliouglu2020recent, roetzer2024spidermatch, ovsjanikov2012functional, cao2023unsupervised}, where both objects are assumed to be complete with no significant missing parts. A smaller body of work focuses on partial-to-full matching~\cite{sahilliouglu2020recent, ehm2024geometrically, cosmo2016shrec, litany2017deep}, where an incomplete object is matched against a complete reference. However, when multiple fragments must be assembled—a scenario arising in protein model building~\cite{riahi2025alignment}, merging partial 3D scans~\cite{neugebauer1997reconstruction}, and solving 2D and 3D puzzles~\cite{domokos2010affine,litany2020non}— current approaches require sequential pairwise matching. This strategy can lead to error accumulation and increased computational cost, motivating the need for multiple-to-multiple partial matching methods.

Recently, Optimal Transport theory (OT) \cite{peyre2019computational} and its related tools became a popular choice establishing correspondence between two objects represented as measurements over a metric space in graph matching \cite{xu2019gromov,li2023convergent}, graph clustering \cite{chowdhury2021generalized}, matching language models \cite{grave2019unsupervised}, and biomolecule matching \cite{riahi2025alignment, riahi2023alignot, singer2024alignment}. In particular, approximations of the Gromov-Wasserstein (GW) distance \cite{memoli2011gromov} have gained popularity, as they provide a powerful tool for matching objects defined in different domains and are rigid body transformation invariant (so they don't require pre-alignment). Furthermore, researchers have designed various extensions and variants of the Gromov-Wasserstein distance \cite{bai2024efficient, chapel2020partial, sejourne2021unbalanced} for partial-to-full matching. The Z-Gromov-Wasserstein distance~\cite{bauer2024z} was recently introduced for the multiple-to-multiple partial matching problem, assuming a Z-structure on the distributions. However, many multiple-to-multiple matching applications lack such structure in the data.


In this paper, we introduce the Joint Gromov-Wasserstein (JGW) problem, that is a novel variant of the original Gromov-Wasserstein formulation that enables the matching of two collections of objects simultaneously.
To summarize, our key contributions are as follows.

\begin{itemize}

\item We formulate the new JGW objective function, which extends the mathematical concepts underlying the Gromov-Wasserstein objective function, such as metric measure spaces and isomorphisms, to handle collections of objects. We prove theoretical properties of the JGW objective, analyzing its metric properties and convergence from point sampling.
    \item We investigate and adapt existing approximation techniques from the standard Gromov-Wasserstein problem to our the JGW framework, to produce feasible and applicable algorithms to compute it.
    \item We demonstrate the usefulness of the JGW formulation through benchmarks against other GW variants, and experiments on object matching problems with various datasets including 2D/3D geometric shapes and biomolecular complexes.

    \end{itemize}

\section*{Related Work}
  Here, we cover most recent works on partial and multiple-to-multiple object matching, with a focus on OT-related approaches. We refer interested readers to  \cite{sahilliouglu2020recent} for a broader survey of object matching methods. 


\paragraph{Partial Matching.} Partial-to-full matching, where an incomplete query must be aligned to a complete template, has received considerable attention due to its practical importance in object recognition and retrieval. Early approaches adapted full matching techniques by incorporating outlier handling mechanisms~\cite{sahilliouglu2020recent} or by identifying and matching salient regions~\cite{rodola2017partial}. The SHREC benchmark for partial matching~\cite{cosmo2016shrec} has driven progress in this area, with top-performing methods leveraging learned descriptors~\cite{litany2017deep}, and region growing strategies~\cite{rodola2017partial}. Graph matching has been extended to the partial setting through modifications that allow node and edge deletions~\cite{cho2013learning, zanfir2018deep}, while point cloud methods have incorporated robust estimators~\cite{yang2020teaser} and learned features~\cite{huang2021predator} to handle missing data. In the optimal transport framework, partial variants such as unbalanced~\cite{chizat2018unbalanced, sejourne2021unbalanced} and semi-relaxed~\cite{chapel2020partial, bai2024efficient} optimal transport have been developed.

\paragraph{Multiple-to-Multiple Matching.} Matching multiple objects simultaneously, rather than through sequential pairwise alignments, has been explored primarily in the context of full shape collections. Litany et al.~\cite{litany2020non} used an extension of Partial Functional Maps to introduce a framework for multiple-to-multiple shape matching. Wu et al. introduced an alternative approach by simultaneous partial functional
 correspondence \cite{wu2023multi}. In the context of Optimal Transport Theory, the Z-Gromov-Wasserstein distance~\cite{bauer2024z} was recently introduced, extending the traditional Gromov-Wasserstein framework to match distributions equipped with Z-structure. However, many multiple-to-multiple matching applications lack such structure in the data.



\section*{Gromov-Wasserstein Distance}
\label{sec:gw}
In this section we briefly introduce the traditional Gromov-Wasserstein distance \cite{memoli2011gromov} along with key definitions.

\subsection*{Preliminaries}
Suppose we are given two compact metric spaces $(X, d_X), (Y, d_Y)$ and measures $\mu_X, \mu_Y$. Following \cite{memoli2011gromov}, define the \textit{metric measure space} (\textit{mm-space}) and the \textit{set of all couplings}, two fundamental concepts for the definition of the Gromov-Wasserstein distance, as follows.

\begin{definition}\label{def:mmspace}
A \textit{metric measure space} (mm-space) is a triple $(X,d_X,\mu_X)$, where $(X,d_X)$ is a compact metric space and $\mu_X$ is a Borel probability measure, i.e., $\mu_X(X) = 1$ and $\text{supp}[\mu_X]=X$. 
An example of a discrete mm-space is illustrated in Figure \ref{fig:example1}\textbf{a}.
\end{definition}


\begin{definition}\label{def:isomorphism}
Two mm-spaces $(X, d_X, \mu_X)$ and $(Y,d_Y,\mu_Y)$ are called \textit{isomorphic} if there exists isometry $\psi: X \rightarrow Y$, i.e., $d_x(x,x') = d_y(\psi(x), \psi(x'))$ for any $x,x' \in X$, such that $(\psi\#\mu_X) = \mu_Y$, where $\#$ denotes the pushforward operator. Note that isomorphism is an equivalence relation, and the GW problem aims to define a metric between equivalence classes of mm-spaces.
\end{definition}

The main goal of the GW problem is to define a metric between ``nonequal" classes of mm-spaces. To complete this task, \cite{memoli2011gromov} defines \textit{isomorphism} as a notion of equality between mm-spaces.

\begin{definition}
    Given two mm-spaces $(X, d_X, \mu_X)$ and $(Y,d_Y,\mu_Y)$, $\mathcal{M}(\mu_X, \mu_Y)$ denotes the set of all transportation plans, such that, $\mu \in \mathcal{M}(\mu_X, \mu_Y)$ is a Borel probability measure on $X \times Y$, and satisfies the marginal constraints $\mu(A \times Y) = \mu_x(A)$ for any Borel subset $A \subset X$, and $\mu(X \times B) = \mu_y(B)$ for any Borel subset $B \subset Y$.
\end{definition}

\subsection*{Formulation}

These definitions enable us to define the \textit{Gromov-Wasserstein distance} as a comparison method between mm-spaces.

\begin{definition}\cite{memoli2011gromov}
Given two mm-spaces $(X, d_X, \mu_X)$ and $(Y,d_Y,\mu_Y)$, the \textit{Gromov-Wasserstein distance} between $X$ and $Y$ is defined as
\begin{equation} \label{eq:gw-definition}\begin{aligned}
&\mathcal{GW}_{\Gamma,p}(X,Y)=\\&  \underset{\mu \in \mathcal{M}(\mu_X, \mu_Y)}{\text{inf}} \frac{1}{2} \left( \int_{X\times Y}\int_{X\times Y} \Gamma(x,y,x',y')\mu(dx\times dy)\mu(dx'\times dy')\right)^{1/p},\end{aligned}\end{equation}
where $\Gamma: X \times Y \times X \times Y \rightarrow \mathbb{R}$ is called the \textit{loss function}. With the typical choice of $\Gamma_p(x,y,x',y') = |d_X(x,x') - d_Y(y,y')|^p$ we often denote $\mathcal{GW}_{\Gamma_p, p}(X,Y)$ by $\mathcal{GW}_{p}(X,Y)$. 
\end{definition}


The minimizer of this optimization problem is called the \textit{transportation plan}, and it can be used to find a matching between $X$ and $Y$, as a metric function between isomorphy classes of mm-spaces:


\begin{theorem}[\cite{memoli2011gromov}]\label{thm:isomorphism-memoli}
    $\mathcal{GW}_p$ defines a metric on the collection of all isomorphism classes of mm-spaces.
\end{theorem}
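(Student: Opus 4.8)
The plan is to verify the three metric axioms for $\mathcal{GW}_p$ on the set of isomorphism classes of mm-spaces: symmetry, the identity of indiscernibles (i.e., $\mathcal{GW}_p(X,Y) = 0$ iff $X$ and $Y$ are isomorphic), and the triangle inequality. Symmetry is immediate: for any coupling $\mu \in \mathcal{M}(\mu_X,\mu_Y)$ the ``transpose'' coupling (swapping the roles of the two factors) lies in $\mathcal{M}(\mu_Y,\mu_X)$, and the integrand $\Gamma_p(x,y,x',y') = |d_X(x,x')-d_Y(y,y')|^p$ is invariant under simultaneously swapping $X\leftrightarrow Y$, so the two infima coincide. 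Also one should note the functional is well-defined on isomorphism classes, since an isomorphism $\psi: X \to X'$ induces a bijection of couplings preserving the objective.

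For the identity of indiscernibles, the easy direction is that if $\psi:X\to Y$ is an isomorphism, then $\mu = (\mathrm{id}\times\psi)\#\mu_X$ is a coupling of $\mu_X,\mu_Y$ on which the integrand vanishes identically (because $d_Y(\psi(x),\psi(x')) = d_X(x,x')$), giving $\mathcal{GW}_p(X,Y)=0$. The converse is the substantive part. First one argues that the infimum in \eqref{eq:gw-definition} is attained: $\mathcal{M}(\mu_X,\mu_Y)$ is weakly compact (Prokhorov, using compactness of $X$ and $Y$), and the objective is weakly lower semicontinuous, so an optimal $\mu$ exists. Given an optimal $\mu$ with zero cost, one has $d_X(x,x') = d_Y(y,y')$ for $\mu\otimes\mu$-a.e. $((x,y),(x',y'))$. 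The task is then to extract from the support of $\mu$ an actual isometry $X\to Y$ whose pushforward is $\mu_Y$; this is the standard ``optimal coupling is (the graph of) an isometry'' argument — restrict to a countable dense set, use the a.e. distance-preservation to build a distance-preserving map on that dense set, extend by uniform continuity to all of $X$, check it is onto by a symmetric argument, and verify the pushforward condition from the marginal constraints together with $\mathrm{supp}[\mu]$ being contained in the graph.

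For the triangle inequality, given mm-spaces $X,Y,Z$ with optimal (or near-optimal) couplings $\mu_{XY} \in \mathcal{M}(\mu_X,\mu_Y)$ and $\mu_{YZ}\in\mathcal{M}(\mu_Y,\mu_Z)$, the plan is to use the gluing lemma: there is a measure $\pi$ on $X\times Y\times Z$ whose $(X,Y)$-marginal is $\mu_{XY}$ and whose $(Y,Z)$-marginal is $\mu_{YZ}$. Let $\mu_{XZ}$ be its $(X,Z)$-marginal, a valid coupling of $\mu_X,\mu_Z$. Then bound $\mathcal{GW}_p(X,Z)$ using $\mu_{XZ}$, and inside the resulting double integral apply the triangle inequality for $d$-values pointwise, $|d_X(x,x')-d_Z(z,z')| \le |d_X(x,x')-d_Y(y,y')| + |d_Y(y,y')-d_Z(z,z')|$, followed by the triangle (Minkowski) inequality in $L^p(\pi\otimes\pi)$. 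This yields $\mathcal{GW}_p(X,Z) \le \mathcal{GW}_p(X,Y)+\mathcal{GW}_p(Y,Z)$.

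I expect the main obstacle to be the converse of the identity of indiscernibles — passing from ``distances agree $\mu\otimes\mu$-almost everywhere'' to an honest everywhere-defined isometry with the correct pushforward. The measure-zero exceptional set must be handled carefully (one works on a countable dense subset of $X$ chosen so that the bad set is avoided, then extends continuously), and one must separately confirm surjectivity and the measure-pushforward identity rather than just distance preservation. Everything else (symmetry, well-definedness, existence of optimizers via compactness, and the gluing-plus-Minkowski proof of the triangle inequality) is comparatively routine. Since this is Theorem~\ref{thm:isomorphism-memoli}, attributed to \cite{memoli2011gromov}, the details can be cited there, but the proof sketch above is the standard route.
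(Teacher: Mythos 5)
Your proposal is correct and follows the standard route of M\'emoli's proof (symmetry via transposed couplings, identity of indiscernibles via compactness of $\mathcal{M}(\mu_X,\mu_Y)$ and extraction of an isometry from the support of a zero-cost optimal coupling, and the triangle inequality via the gluing lemma plus Minkowski's inequality). The paper itself offers no proof of this statement --- it is quoted directly from \cite{memoli2011gromov} --- and your sketch matches the argument given there.
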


\section*{The Joint Gromov-Wasserstein Objective}\label{sec:jgw}
\subsection*{Preliminaries and Definition} 
To enable multiple-to-multiple object matching, we extend fundamental concepts associated with the GW distance, by first introducing \textit{distributions of metric measure spaces} (see also Definition \ref{def:mmspace}).

\begin{definition}
A \textit{distribution of mm-spaces} is a categorical distribution of $k_X$ mm-spaces, usually denoted  $\mathbf{X} = (X_i, d_{X_i}, \mu_{X_i}, s_{X_i})_{i\in[k_X]}$, where

$(i)$ $\forall i \in [k_X]$, $(X_i, d_{X_i}, \mu_{X_i})$ is a metric measure space (called \textit{cluster $i$})
 
 $(ii)$ $s_{X_i} \in \mathbb{R}_{>0}$ is the probability assigned to cluster $i$.

\end{definition}


Figure \ref{fig:example1} shows an illustrative comparison between a mm-space (\ref{fig:example1}\textbf{a}) and a distribution of mm-spaces (\ref{fig:example1}\textbf{b}). To provide a framework for comparing distributions of mm-spaces, we introduce the notion of \textit{embedding}:

\begin{definition}\label{def:embedding}
Given a distribution of mm-spaces  $\mathbf{X} = (X_i, d_{X_i}, \mu_{X_i}, s_{X_i})_{i\in [k_X]}$, an \textit{embedding} of $\mathbf{X}$ is a mm-space $(X, d_X, \mu_X)$ such that 
there exist $k_X$ isometries $(\psi_i: X_i \rightarrow X)_{i\in[k_X]}$, such that 

$(i)$ $\sum_{j\in[k_X]} s_{X_j}\times \psi_j\#\mu_{X_j} = \mu_X$ 

$(ii)$ $\forall (j,\ne k) \in [k_X]^2$, $\psi_j(X_j) \cap \psi_k(X_k) = \emptyset$ 

$(iii)$ $X = \bigcup_{i \in [k_X]} \psi_{X_i}(X_i).$

We call the $\psi_i$'s \textit{embedding function}s. 
\end{definition}

Using embeddings, we now formulate the Joint Gromov-Wasserstein objective:

\begin{definition}
Given two distributions of mm-spaces  $\mathbf{X}$ and $\mathbf{Y}$
and embeddings $(X, d_X, \mu_X)$ and $(Y,d_Y,\mu_Y)$ with embedding functions $(\psi_{X_i})_{i\in[k_X]}$ and $(\psi_{Y_i})_{i\in[k_Y]}$ respectively, the \textit{joint Gormov-Wasserstein divergence}
between $\mathbf{X}$ and $\mathbf{Y}$ is defined by
\begin{equation}
\label{eq:jgw-definition}
\mathcal{JGW}_p(\mathbf{X},\mathbf{Y}) = \mathcal{GW}_{\Gamma_p^*,p}(X,Y),
\end{equation}
where for all $(i,j) \in [k_X]\times[k_Y]$ and $(x,x',y,y')\in Im(\psi_{X_i})^2\times Im(\psi_{Y_j})^2$,
\begin{equation}\label{eq:def-gammastar}
    \Gamma_p^*(x,y,x',y') = |d_X(x,x') - d_Y(y,y')|^p,
\end{equation}
and $\Gamma_p^* =0$ otherwise. 
\end{definition}

 We note that while the definition of $\mathcal{JGW}_p(\mathbf{X},\mathbf{Y})$ uses given embeddings $X,Y$, its value and the associated transport plan do not depend on these. They also neither depend on the choice of the embedding functions:
 
\begin{theorem}\label{thm:embedding-unique}
    Given two distributions of mm-spaces  $\mathbf{X}$ and $\mathbf{Y}$ and different embeddings $X_1, X_2$ for $\mathbf{X}$ and $Y_1, Y_2$ for $\mathbf{Y}$, 
    we have
    $$\mathcal{GW}_{\Gamma_p^*,p}(X_1,Y_1) = \mathcal{GW}_{\Gamma_p^*,p}(X_2,Y_2).$$
\end{theorem}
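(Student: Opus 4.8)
The plan is to show that the Gromov--Wasserstein value with the modified loss $\Gamma_p^*$ is an intrinsic invariant of the pair of distributions $(\mathbf{X},\mathbf{Y})$, independent of the chosen embeddings. The natural strategy is to reduce everything to the case of a \emph{single} varying embedding by transitivity: it suffices to prove that if $X_1$ and $X_2$ are two embeddings of the same $\mathbf{X}$ (with the same $\mathbf{Y}$ and embedding $Y$ fixed), then $\mathcal{GW}_{\Gamma_p^*,p}(X_1,Y) = \mathcal{GW}_{\Gamma_p^*,p}(X_2,Y)$; applying this twice (once on the $\mathbf{X}$ side, once on the $\mathbf{Y}$ side) gives the full statement. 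So the core of the argument is a ``change of embedding'' lemma on one side.

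First I would set up notation: let $(\psi^1_{X_i})_{i\in[k_X]}$ and $(\psi^2_{X_i})_{i\in[k_X]}$ be the embedding functions into $X_1$ and $X_2$. The key observation is that for each cluster $i$, the map $\varphi_i := \psi^2_{X_i}\circ(\psi^1_{X_i})^{-1}: \mathrm{Im}(\psi^1_{X_i}) \to \mathrm{Im}(\psi^2_{X_i})$ is a measure-preserving isometry between the two images (being a composition of isometries that each pushes $\mu_{X_i}$ to the appropriate restricted measure). Because of disjointness condition $(ii)$ and covering condition $(iii)$ in Definition~\ref{def:embedding}, the sets $\mathrm{Im}(\psi^1_{X_i})$ partition $X_1$ up to the relevant measure-zero overlaps, and likewise for $X_2$; hence the $\varphi_i$ glue to a single measurable bijection $\varphi: X_1 \to X_2$ with $\varphi\#\mu_{X_1} = \mu_{X_2}$. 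This $\varphi$ need \emph{not} be a global isometry of $X_1$ --- distances \emph{between} different clusters are unconstrained --- but that is exactly why $\Gamma_p^*$ is designed to vanish on cross-cluster quadruples.

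Next I would use $\varphi$ to transport couplings. Given any $\mu \in \mathcal{M}(\mu_{X_1},\mu_{Y})$, define $\mu' := (\varphi\times\mathrm{id}_Y)\#\mu \in \mathcal{M}(\mu_{X_2},\mu_{Y})$; this is a bijection between the two coupling sets with inverse induced by $\varphi^{-1}$. The crucial computation is that the objective is preserved: for $\mu$-almost every pair of points $(x,y),(x',y')$, either $x,x'$ lie in the same cluster image $\mathrm{Im}(\psi^1_{X_i})$ --- in which case $d_{X_1}(x,x') = d_{X_2}(\varphi(x),\varphi(x'))$ by the cluster-wise isometry, and the value of $\Gamma_p^*$ is literally unchanged --- or they lie in different cluster images, in which case $\Gamma_p^*$ is zero on both sides by definition (and $\varphi$ maps different cluster images of $X_1$ to different cluster images of $X_2$). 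One should check the same dichotomy applies simultaneously to the $y,y'$ coordinates, so that the support condition $(x,x',y,y')\in\mathrm{Im}(\psi_{X_i})^2\times\mathrm{Im}(\psi_{Y_j})^2$ defining the nonzero region of $\Gamma_p^*$ is preserved by $(\varphi\times\mathrm{id})$. Hence the double integral in \eqref{eq:gw-definition} takes the same value for $\mu$ and $\mu'$, the infima over the two coupling sets coincide, and the one-sided claim follows; transitivity then finishes the theorem.

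The main obstacle I anticipate is the measure-theoretic bookkeeping around the cluster images: condition $(ii)$ as written guarantees disjointness, but one must be careful that the $\psi^j_{X_i}$ are measurable with measurable inverses on their images (so that $\varphi$ is a genuine measurable isomorphism and pushforwards are well-defined), and that the ``same cluster vs.\ different cluster'' dichotomy holds $\mu\otimes\mu$-almost everywhere rather than everywhere --- points on cluster boundaries could in principle lie in two images, but since each $\mu_{X_i}$ is a probability measure and the decomposition $\mu_X = \sum_j s_{X_j}\,\psi^j_{X_j}\#\mu_{X_j}$ is a finite mixture, any such overlap is handled by restricting attention to a full-measure set. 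Once the almost-everywhere partition of $X_1$ (and of $X_2$) into cluster images is established, the rest is a direct change-of-variables in the integral and is routine.
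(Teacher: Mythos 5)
Your proposal is correct and follows essentially the same route as the paper: the glued map $\varphi=\psi^2_{X_i}\circ(\psi^1_{X_i})^{-1}$ is exactly the paper's $\pi^*_X=\pi_{X_2}\circ\pi_{X_1}^{-1}$ built from the projection bijections of Lemma~\ref{lemma:projection}, the same-cluster/cross-cluster dichotomy for $\Gamma_p^*$ matches the paper's two cases, and the pushforward of couplings with a change of variables is the paper's concluding step. The only cosmetic difference is that you split the argument into two one-sided steps via transitivity while the paper changes both embeddings simultaneously; your extra attention to measurability of the gluing is a reasonable refinement the paper leaves implicit.
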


The proof, detailed in Appendix \ref{app:uniqueness} constructs a cost-preserving bijection between $\mathcal{M}(\mu_{X_1}, \mu_{Y_1})$ and $\mathcal{M}(\mu_{X_2}, \mu_{Y_2})$ with respect to the $\Gamma^*_p$ cost.

\subsection*{Partial Ismorphism}

To establish key properties of the Joint Gromov Wassertein objective function as a similarity measure between distributions of mm-spaces, we now extend the notion of isomorphism (Definition \ref{def:isomorphism}) to \textit{partial isomorphism} as follows. 

\begin{definition}
\label{def:partialisomorphism}
    Two distributions of mm-spaces  $\mathbf{X} = (X_i, d_{X_i}, \mu_{X_i}, s_{X_i})_{i\in[k_X]}$ and $\mathbf{Y} = (Y_i, d_{Y_i}, \mu_{Y_i}, s_{Y_i})_{i\in[k_Y]}$ are called \textit{partially isomorphic} 
    if 
    there exists a distribution of mm-spaces $\mathbf{Z}$, indexed by $(i,j)\in [k_X \times k_Y]$
    $$\mathbf{Z} = (Z_{i,j}, d_{Z_{i,j}}, \mu_{Z_{i,j}}, s_{Z_{i,j}})_{i\in[ k_X], j\in [k_Y]},$$ and 
    isometry functions $\psi^X_{i,j}:Z_{i,j} \rightarrow X_i$ and $\psi^Y_{i,j}: Z_{i,j} \rightarrow Y_{j}$, such that $$\sum_j \psi^X_{i,j}\#\mu_{Z_{i,j}} \times s_{Z_{i,j}} = \mu_{X_{i}}\times s_{X_i} \text{ and } \sum_i\psi^Y_{i,j}\#\mu_{Z_{i,j}} \times s_{Z_{i,j}} = \mu_{Y_{j}}\times s_{Y_j}.$$ 
\end{definition}

Similar to \cite{memoli2011gromov} for mm-spaces, we can then extend Theorem \ref{thm:isomorphism-memoli} to distributions of mm-spaces:

\begin{theorem}\label{thm:isomorphism}
Given two distribution of mm-spaces $\mathbf{X}$, $\mathbf{Y}$ and $p \in [1, \infty)$, $\mathcal{JGW}_p(\mathbf{X}, \mathbf{Y}) = 0$ if and only if $\mathbf{X}$ and $\mathbf{Y}$ are partially isomorphic.
\end{theorem}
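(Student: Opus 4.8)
The plan is to prove both directions by reducing everything to the classical Gromov--Wasserstein statement (Theorem~\ref{thm:isomorphism-memoli}), using the embeddings $(X,d_X,\mu_X)$ and $(Y,d_Y,\mu_Y)$ from Definition~\ref{def:embedding}. Throughout I would fix such embeddings and note, by Theorem~\ref{thm:embedding-unique}, that the quantity $\mathcal{JGW}_p(\mathbf{X},\mathbf{Y})=\mathcal{GW}_{\Gamma_p^*,p}(X,Y)$ does not depend on the choices made. The key structural observation is that $\Gamma_p^*$ vanishes exactly on pairs $(x,y,x',y')$ where $x,x'$ lie in the same cluster image $Im(\psi_{X_i})$ and $y,y'$ in the same cluster image $Im(\psi_{Y_j})$; so a coupling $\mu\in\mathcal{M}(\mu_X,\mu_Y)$ achieves objective value $0$ if and only if $\mu\otimes\mu$-almost every quadruple is either \emph{cluster-aligned} (in which case $d_X(x,x')=d_Y(y,y')$ is forced) or lies in the zero set of $\Gamma_p^*$. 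Unwinding this, a zero-cost coupling is precisely one that, when restricted to each block $Im(\psi_{X_i})\times Im(\psi_{Y_j})$, acts as an isometric coupling between the relevant sub-supports.

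For the ``if'' direction, suppose $\mathbf{X}$ and $\mathbf{Y}$ are partially isomorphic with witness $\mathbf{Z}=(Z_{i,j},\dots,s_{Z_{i,j}})$ and isometries $\psi^X_{i,j}:Z_{i,j}\to X_i$, $\psi^Y_{i,j}:Z_{i,j}\to Y_j$. I would build an explicit coupling $\mu$ on $X\times Y$ by pushing forward $\mu_{Z_{i,j}}$ under $(\psi_{X_i}\circ\psi^X_{i,j},\ \psi_{Y_j}\circ\psi^Y_{i,j})$, weighting block $(i,j)$ by $s_{Z_{i,j}}$, and summing. The marginal conditions in Definition~\ref{def:partialisomorphism} (together with the embedding identities $\sum_i s_{X_i}\psi_{X_i}\#\mu_{X_i}=\mu_X$, etc.) give exactly $\mu\in\mathcal{M}(\mu_X,\mu_Y)$; and since each $\psi^X_{i,j}$, $\psi^Y_{i,j}$ is an isometry, for $\mu\otimes\mu$-a.e.\ quadruple coming from blocks $(i,j)$ and $(k,\ell)$ we have $x\in Im(\psi_{X_i}),x'\in Im(\psi_{X_k})$ and $y\in Im(\psi_{Y_j}),y'\in Im(\psi_{Y_\ell})$, so $\Gamma_p^*$ is nonzero only when $i=k$ and $j=\ell$, and then $d_X(x,x')=d_{Z_{i,j}}(z,z')=d_Y(y,y')$, making the integrand vanish. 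Hence $\mathcal{JGW}_p(\mathbf{X},\mathbf{Y})=0$.

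For the ``only if'' direction, suppose $\mathcal{JGW}_p(\mathbf{X},\mathbf{Y})=0$; by compactness and weak-$*$ lower semicontinuity of the objective (standard, as in \cite{memoli2011gromov}) the infimum is attained by some $\mu^\star\in\mathcal{M}(\mu_X,\mu_Y)$ with zero cost. Decompose $\mu^\star=\sum_{i,j}\mu^\star_{i,j}$ where $\mu^\star_{i,j}$ is the restriction to $Im(\psi_{X_i})\times Im(\psi_{Y_j})$ (the cluster images partition $X$ and $Y$ by condition (ii) of Definition~\ref{def:embedding}, so this is a genuine disjoint decomposition up to $\mu^\star$-null overlaps). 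Set $s_{Z_{i,j}}=\mu^\star_{i,j}(X\times Y)$ and, on the support of the normalized $\mu^\star_{i,j}$, define $Z_{i,j}$ to be that support with the metric pulled back from $X$ (equivalently from $Y$) and $\psi^X_{i,j}$, $\psi^Y_{i,j}$ the coordinate projections composed with $\psi_{X_i}^{-1}$, $\psi_{Y_j}^{-1}$. The zero-cost condition forces $d_X=d_Y$ along $\mu^\star_{i,j}\otimes\mu^\star_{i,j}$, which is exactly what makes these projections isometries onto their images; the marginals of $\mu^\star$ then yield the two summation identities of Definition~\ref{def:partialisomorphism}, so $\mathbf{Z}$ witnesses partial isomorphism.

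\textbf{Main obstacle.} The delicate point is the ``only if'' direction: turning the measure-theoretic zero-cost statement into honest \emph{pointwise} isometries and a well-defined distribution $\mathbf{Z}$. I expect the technical work to be (a) passing from ``$d_X(x,x')=d_Y(y,y')$ holds $\mu^\star_{i,j}\otimes\mu^\star_{i,j}$-a.e.'' to a genuine isometric identification of supports --- this needs the $\mathrm{supp}[\mu_X]=X$ hypothesis and a continuity/density argument to upgrade a.e.\ equality on a dense set to equality everywhere, mirroring Mémoli's argument that a zero-cost GW coupling induces an isometry; and (b) checking measurability and the disjointness/covering conditions so that $\mathbf{Z}$ really is a distribution of mm-spaces. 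Handling clusters with $s_{Z_{i,j}}=0$ (empty blocks) and ensuring the supports are compact are minor bookkeeping. Everything else is a direct translation through the embedding.
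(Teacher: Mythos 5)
Your proposal follows essentially the same route as the paper: for the ``if'' direction you build the same explicit coupling (the paper sums the masses $\mu_{Z_{i,j}}(z)$ over preimages, which is exactly your pushforward $\sum_{i,j}s_{Z_{i,j}}\,(\psi_{X_i}\circ\psi^X_{i,j},\psi_{Y_j}\circ\psi^Y_{i,j})\#\mu_{Z_{i,j}}$ written pointwise), and for the ``only if'' direction the paper likewise takes a zero-cost minimizer (its Lemma on attainment), defines $Z_{i,j}$ as the block of the coupling's support over $X_i\times Y_j$ with the coordinate projections as isometries, and reads off the marginal identities. Your formulation via pushforwards and your explicit flagging of the a.e.-to-pointwise upgrade are, if anything, slightly more careful than the paper's write-up, but the argument is the same.
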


The proof, detailed in Appendix \ref{app:proofs} establishes the existence of a transport map minimizing the JGW cost, then leverages this map to construct a partial isomorphism between the two distributions.

\begin{remark}
    Although Theorem \ref{thm:isomorphism} shows that some properties of isomorphism of $\mathcal{GW}_p$ naturally extend to $\mathcal{JGW}_p$, note that the Joint Gromov-Wasserstein objective function does not form a proper distance function that holds the triangle inequality (which is not issue for our goal of matching two collections of objects). As a counterexample, let $\mathbf{X} = \{0, 1\}, \mathbf{Z} = \{0, 2\}$, both equipped with the uniform distribution and $\mathbf{Y}$ be a distribution of mm-spaces with two one-point set clusters of equal mass. One can verify that $\mathbf{X}$ and $\mathbf{Y}$ are partially isomorphic, thus, $\mathcal{JGW}_p(\mathbf{X}, \mathbf{Y}) = 0$. With the same argument we can see that $\mathcal{JGW}_p(\mathbf{Z}, \mathbf{Y}) = 0$. However, $\mathbf{X}$ and $\mathbf{Z}$ are not partially isomorphic hence $\mathcal{JGW}_p(\mathbf{Z}, \mathbf{X}) > 0$.  

\end{remark}

\subsection*{Point Sampling Convergence}

In the context of shape matching, having point sampling convergence for the objective is crucial, since objects get discretized or represented by point clouds.  The following theorem ensures that we also asymptotically recover JGW when doing so: 

\begin{theorem}\label{thm:sampling} 
    Let $\mathbf{X} = (X_i, d_{X_i}, \mu_{X_i}, s_{X_i})_{i\in[k_X]}$ be a distribution of mm-spaces and $p\in [1, \infty)$, $n \in \mathbb{N}$. Consider $n$ i.i.d samples from $\mathbf{X}$ (by randomly picking a cluster $j$ from Cat($s_i$) and sampling a point in $X_j$ from $\mu_{X_j}$), distributed into the $k_X$ mm-spaces of $\mathbf{X}$ as $\{X^n_i\}_{i \in [k_X]}$. Let $\mathbf{X}^n$ be a distribution of mm-spaces defined as $(X^n_i, d_{X_i}, \mu_i, s_{X_i})_{i \in [k_X]}$ where $\mu_i$ is the uniform measure on $X^n_i$. Then $\mathcal{JGW}_p(\mathbf{X}^n, \mathbf{X}) \rightarrow 0$ almost surely as $n \rightarrow \infty$.
\end{theorem}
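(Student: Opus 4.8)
\emph{Proof proposal.} The plan is to reduce $\mathcal{JGW}_p(\mathbf{X}^n,\mathbf{X})$ to the ordinary Gromov--Wasserstein distances between corresponding clusters, where classical point-sampling convergence is available, and then propagate those estimates. By Theorem~\ref{thm:embedding-unique} I may choose the embeddings freely: fix an embedding $(X,d_X,\mu_X)$ of $\mathbf{X}$ with isometric embedding functions $\psi_{X_i}\colon X_i\to X$, and realize an embedding $\hat X$ of $\mathbf{X}^n$ by restriction, $\hat X=\bigcup_i\psi_{X_i}(X_i^n)\subseteq X$ with embedding functions $\psi_{X_i}|_{X_i^n}$. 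The decisive feature of $\Gamma_p^*$ is that it vanishes unless both point-pairs lie in a single common cluster image. Hence for any couplings $\nu_i$ of the uniform measure $\mu_i$ on $X_i^n$ with $\mu_{X_i}$, the ``block-diagonal'' plan $\mu=\sum_i s_{X_i}\,(\psi_{X_i}\times\psi_{X_i})\#\nu_i$ is admissible (its two marginals are exactly the embedded measures $\mu_{\hat X}$ and $\mu_X$, using $\sum_i s_{X_i}=1$), all cross terms $i\ne i'$ are annihilated by $\Gamma_p^*$, and each diagonal term collapses via the isometries $\psi_{X_i}$ to $|d_{X_i}(u,u')-d_{X_i}(v,v')|^p$; so its cost is $\sum_i s_{X_i}^2\iint|d_{X_i}(u,u')-d_{X_i}(v,v')|^p\,\nu_i\,\nu_i$. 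Optimizing each $\nu_i$ separately gives, writing $\mathcal{GW}_p(X_i^n,X_i)$ for $\mathcal{GW}_p\big((X_i^n,d_{X_i},\mu_i),(X_i,d_{X_i},\mu_{X_i})\big)$,
\[
\mathcal{JGW}_p(\mathbf{X}^n,\mathbf{X})\;\le\;\Big(\textstyle\sum_i s_{X_i}^2\,\mathcal{GW}_p(X_i^n,X_i)^p\Big)^{1/p}\;\le\;\max_{i}\mathcal{GW}_p(X_i^n,X_i).
\]
(For finitely many small $n$ some $X_i^n$ may be empty, but since $s_{X_i}>0$ this a.s.\ fails only finitely often, hence is irrelevant to the limit.)

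Next I would compare GW with Wasserstein on each cluster. Since $\mu_i$ and $\mu_{X_i}$ both live on the compact space $(X_i,d_{X_i})$, feeding an almost-optimal $\mathcal{W}_p$-coupling $\pi$ into the GW functional and using $|d_{X_i}(u,u')-d_{X_i}(v,v')|\le d_{X_i}(u,v)+d_{X_i}(u',v')$ bounds the cost by $2^p\int d_{X_i}^p\,d\pi$, so that $\mathcal{GW}_p(X_i^n,X_i)\le \mathcal{W}_p(\mu_i,\mu_{X_i})$.

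Then comes the probabilistic step. Model the draws as an i.i.d.\ sequence $(C_t,P_t)_{t\ge1}$ with $C_t\sim\mathrm{Cat}(s)$ and $P_t\mid C_t{=}j\sim\mu_{X_j}$. The subsequence of points landing in cluster $i$ is i.i.d.\ $\sim\mu_{X_i}$ and independent of the counts $N_i^n=\#\{t\le n:C_t=i\}$; by the strong law $N_i^n\to\infty$ a.s., and by Varadarajan's theorem the empirical measures of i.i.d.\ $\mu_{X_i}$-samples converge weakly to $\mu_{X_i}$ a.s., which on the compact $X_i$ upgrades to $\mathcal{W}_p$-convergence. Evaluating along the a.s.-divergent indices $N_i^n$ gives $\mathcal{W}_p(\mu_i,\mu_{X_i})\to0$ a.s.; intersecting the finitely many probability-one events over $i\in[k_X]$ and combining with the first two steps yields $\mathcal{JGW}_p(\mathbf{X}^n,\mathbf{X})\to0$ a.s.

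I expect the main obstacle to be the bookkeeping in the last step: making rigorous that the within-cluster point stream is i.i.d.\ and independent of the random cluster sizes, so that the classical a.s.\ Wasserstein convergence of empirical measures can be invoked along the random subsequence $(N_i^n)_n$. A minor additional point is the interpretation of ``uniform measure on $X_i^n$'' when $\mu_{X_i}$ has atoms; treating $X_i^n$ as a multiset (so that $\mu_i$ is literally the empirical measure) makes this harmless and the argument goes through verbatim.
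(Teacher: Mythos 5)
Your proposal is correct, but it takes a genuinely different route from the paper. The paper's argument is a three-line reduction: it observes that the sampled points, pushed into a fixed embedding $X$ of $\mathbf{X}$ and equipped with the empirical measure, form an embedding of $\mathbf{X}^n$ inside the same ambient space; it then uses the pointwise inequality $\Gamma_p^*\le\Gamma_p$ to bound $\mathcal{JGW}_p(\mathbf{X}^n,\mathbf{X})=\mathcal{GW}_{\Gamma_p^*,p}(X^n,X)\le\mathcal{GW}_{\Gamma_p,p}(X^n,X)$, and invokes M\'emoli's sampling theorem (Theorem 5.1(e) of the GW paper) on the whole embedded space to conclude. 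You instead decompose cluster by cluster: the block-diagonal coupling built from per-cluster plans $\nu_i$ gives the quantitative bound $\mathcal{JGW}_p(\mathbf{X}^n,\mathbf{X})\le\bigl(\sum_i s_{X_i}^2\,\mathcal{GW}_p(X_i^n,X_i)^p\bigr)^{1/p}$, you then dominate each per-cluster GW term by the Wasserstein distance $\mathcal{W}_p(\mu_i,\mu_{X_i})$ on the common metric space $(X_i,d_{X_i})$, and you re-derive the almost-sure convergence from Varadarajan's theorem along the random subsequence $N_i^n\to\infty$. Your version is longer but buys two things: an explicit rate-transfer inequality in terms of per-cluster empirical Wasserstein distances, and a careful treatment of the fact that the measure on $\mathbf{X}^n$ is the \emph{reweighted} one ($s_{X_i}$ times the uniform measure on $X_i^n$) rather than the global $\tfrac1n$-empirical measure to which the classical sampling theorem applies verbatim --- a discrepancy the paper's shorter argument passes over silently. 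Both your admissibility check for the block-diagonal plan and the constant-tracking in the $\mathcal{GW}_p\le\mathcal{W}_p$ comparison (via $|d(u,u')-d(v,v')|\le d(u,v)+d(u',v')$ and the factor $\tfrac12$ in the GW normalization) are sound, and your remarks on empty clusters and atoms are the right caveats.
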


To prove the theorem (details in Appendix \ref{app:proof-samp}), we establish an inequality between JGW cost and the Gromov-Wasserstein distance that yields the result.

\subsection*{The Joint Gromov-Wasserstein Objective in Finite Space}
In practice, we are interested in solving a discretized version of the Joint Gromov-Wasserstein objective function. Formally, let $\mathbf{X} = (X_i, d_{X_i}, \mu_{X_i}, s_{X_i})_{i\in[k_X]}$ and $\mathbf{Y} = (Y_i, d_{Y_i}, \mu_{Y_i}, s_{Y_i})_{i=1\in[k_Y]}$, with $X_i$ and $Y_i$ being finite for all $i$, and let us denote $n_{X_i}$ and $n_{Y_i}$ the cardinal of $X_i$ and $Y_i$ respectively, so  $X_i = \{x_{i,j}\}_{j\in[n_{X_i}]}$ and $Y_i = \{y_{i.j}\}_{j=1\in n_{Y_i}}$, with pairwise distance matrices $d_{X_i} \in \mathbb{R}_{\ge 0}^{n_{X_i} \times n_{X_i}}$ and $d_{Y_i} \in \mathbb{R}_{\ge 0}^{n_{Y_i} \times n_{Y_i}}$. To simplify our embedding notation, we also denote $X = \bigcup_{i=1}^{k_X} X_i$ and $Y = \bigcup_{i=1}^{k_Y} Y_i$. We now create $\mu_X$ and $\mu_Y$ as distributions over $X$ and $Y$ respectively, as
\begin{equation}
    \nonumber
    \mu_X[x_{i,j}] = s_{X_i}\mu_{X_i}[x_{i,j}], \quad\mu_Y[y_{i,j}] = s_{Y_i}\mu_{Y_i}[y_{i,j}].
\end{equation}
Note that with this definition we clearly have $\mu_X[X] = \mu_Y[Y] = 1$.
Next, we define the block matrices
\begin{align}
d^X = \begin{bmatrix}
    d_{X_1} & 0 & \dots & 0\\
    0 & d_{X_2} & \dots & 0\\
    \vdots & \vdots &\ddots & \vdots\\
    0 & 0 & \dots & d_{X_{k_X}}
\end{bmatrix},\quad
I^X = \begin{bmatrix}
    J_{n_{X_1}, n_{X_1}} & 0 & \dots & 0\\
    0 & J_{n_{X_2}, n_{X_2}} & \dots & 0\\
    \vdots & \vdots &\ddots & \vdots\\
    0 & 0 & \dots & J_{n_{X_{k_X}}, n_{X_{k_X}}}
\end{bmatrix}. \nonumber
\end{align}

 With these definitions, we have $d^X \otimes I^X = d^X$, and $d^Y \otimes I^Y = d^Y$ where $\otimes$, denotes the elementwise multiplication and $j_{n\times m}$ denotes a $n \times m$ where all entries all one. For simplicity, we now assume $p=2$ and denote $\mathcal{JGW}_2(\mathbf{X}, \mathbf{Y})$ by  $\mathcal{JGW}(\mathbf{X}, \mathbf{Y})$. Using \eqref{eq:gw-definition} and \eqref{eq:jgw-definition}, we have
\begin{align} \label{eq:jgw-discrete}
    \mathcal{JGW}(\mathbf{X}, \mathbf{Y}) =  \underset{\mu \in \mathcal{M}(\mu_X, \mu_Y)}{\text{min}} \frac{1}{2} \left(\sum_{i,j,k,l} \lvert d^X_{ij} - d^Y_{kl} \rvert^2 I^X_{ij}I^Y_{kl} \mu_{ik}\mu_{jl} \right)^{1/2}.
\end{align}

\subsection*{Regularization and Computation}
Our goal is now to compute $\mathcal{JGW}(\mathbf{X}, \mathbf{Y})$ as given in equation \eqref{eq:jgw-discrete}. One of the main limitations of the Gromov-Wasserstein distance is the non-convexity of its formulation which makes its computation challenging. To overcome this challenge, various approximations and algorithms for GW distance or its variants have been proposed, such as linear lower bounds \cite{memoli2011gromov}, entropic regularization \cite{solomon2016entropic},  operator splitting-based relaxation \cite{li2023convergent}, and Frank-Wolfe optimization
algorithm \cite{chapel2020partial}. To compute \eqref{eq:jgw-discrete} we can adapt most of these techniques, including methods used in \cite{li2023convergent, solomon2016entropic, chapel2020partial}, and one of the linear lower bounds proved in \cite{memoli2011gromov}. In the rest of this paper, as a proof of concept, we focus on adapting entropic regularization \cite{solomon2016entropic}, which is one of the most widely used approximations for OT problems \cite{cuturi2013sinkhorn}. To do so, we introduce the regularization term to \eqref{eq:jgw-discrete}:
\begin{align} \label{eq:jgw-entropic}
    \mathcal{JGW}^\epsilon(\mathbf{X}, \mathbf{Y}) = \underset{\mu \in \mathcal{M}(\mu_X, \mu_Y)}{\text{min}} \frac{1}{2}\left(\sum_{i,j,k,l} \lvert d^X_{ij} - d^Y_{kl} \rvert^2 I^X_{ij}I^Y_{kl} \mu_{ik}\mu_{jl}  - \epsilon H(\mu)\right)^{1/2}, \nonumber
\end{align} 
where $H$ is the usual entropy function defined by
\begin{equation}\label{eq:entropy-definition}
    H(\mu) = -\sum_{i,j}\mu_{i,j}\log (\mu_{i,j}),\nonumber
\end{equation}
and $\epsilon \in \mathbb{R}_{\ge 0}$ is called the \emph{regularization parameter}.

\begin{proposition}\label{prp:regularization}
    Given $\mathbf{X}, \mathbf{Y}$ and $\epsilon \in \mathbb{R}_{\ge 0}$, we have

    \begin{equation}\label{eq:jgw-entropic-simplified}
    \mathcal{JGW}^\epsilon(\mathbf{X}, \mathbf{Y}) = \underset{\mu \in \mathcal{M}(\mu_X, \mu_Y)}{\text{min}} \frac{1}{2} \left( \langle \mu, \Lambda(\mu) \rangle  - \epsilon H(\mu)\right)^{1/2},\nonumber
\end{equation}
where $\langle .,.\rangle$ is the inner product of two given matrices, the superscript $\wedge2$ denotes the elementwise square of a matrix, and $\Lambda$ is defined as
\begin{equation}
    \Lambda(\mu) = d^{X\wedge2} \mu I^Y  - 2  d^X \mu d^Y + I^X \mu d^{Y\wedge2}.\nonumber
\end{equation}
\end{proposition}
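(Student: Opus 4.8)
The plan is to reduce the claim to a single algebraic identity between quadratic forms in $\mu$. Since the prefactor $\tfrac12(\cdot)^{1/2}$, the entropy term $\epsilon H(\mu)$, and the feasible set $\mathcal{M}(\mu_X,\mu_Y)$ all appear unchanged on both sides of the asserted equality, it suffices to show that
\[
\sum_{i,j,k,l}\lvert d^X_{ij}-d^Y_{kl}\rvert^2\, I^X_{ij}I^Y_{kl}\,\mu_{ik}\mu_{jl} \;=\; \langle \mu,\Lambda(\mu)\rangle
\]
holds for every matrix $\mu$ of the right shape, and then take the minimum over $\mathcal{M}(\mu_X,\mu_Y)$ on both sides.

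First I would expand the square as $\lvert d^X_{ij}-d^Y_{kl}\rvert^2 = (d^X_{ij})^2 - 2d^X_{ij}d^Y_{kl} + (d^Y_{kl})^2$ and split the quadruple sum into the three corresponding pieces $S_1,S_2,S_3$. The key input is the block structure noted just above \eqref{eq:jgw-discrete}: both $d^X$ and $d^{X\wedge2}$ are block-diagonal with support inside the blocks where $I^X\equiv 1$, so that $d^X_{ij}I^X_{ij}=d^X_{ij}$ and $d^{X\wedge2}_{ij}I^X_{ij}=d^{X\wedge2}_{ij}$ (equivalently $d^X\otimes I^X=d^X$), and symmetrically for $Y$. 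Using these identities I can absorb the redundant indicator in each term: in $S_1$ the factor $I^X_{ij}$ drops and only $I^Y_{kl}$ remains; in $S_3$ only $I^X_{ij}$ remains; in the cross term $S_2$ both indicators drop.

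Next I would recognise each simplified sum as a bilinear form. Writing out $\langle\mu,A\mu B\rangle=\sum_{i,j,k,l}\mu_{il}A_{ij}\mu_{jk}B_{kl}$ and comparing with $S_1=\sum_{i,j,k,l}d^{X\wedge2}_{ij}I^Y_{kl}\mu_{ik}\mu_{jl}$, the only discrepancy is that the couplings enter as $\mu_{ik}\mu_{jl}$ instead of $\mu_{il}\mu_{jk}$; relabelling $k\leftrightarrow l$ and invoking symmetry of $I^Y$ (respectively of $d^Y$ and $d^{Y\wedge2}$ in the other two terms) removes it. This gives $S_1=\langle\mu,d^{X\wedge2}\mu I^Y\rangle$, $S_2=-2\langle\mu,d^X\mu d^Y\rangle$, and $S_3=\langle\mu,I^X\mu d^{Y\wedge2}\rangle$; summing and using bilinearity of $\langle\cdot,\cdot\rangle$ yields $\langle\mu,\Lambda(\mu)\rangle$, which is the desired identity.

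I do not expect a genuine obstacle: the argument is essentially careful bookkeeping. The one point that requires attention is tracking which indicator is absorbed and which must be kept in each of $S_1,S_2,S_3$ — dropping $I^Y$ from $S_1$ or $I^X$ from $S_3$ would be incorrect, since $d^{X\wedge2}$ (resp. $d^{Y\wedge2}$) does not vanish on the $Y$- (resp. $X$-) off-block entries — and applying the swap $k\leftrightarrow l$ consistently so that the product $\mu_{ik}\mu_{jl}$ is matched to the matrix-multiplication pattern $\mu_{il}\mu_{jk}$ in every term. A minor further remark is that the probability-coupling structure of $\mu$ is never used in the identity itself, only to justify that both minimisations range over the same set $\mathcal{M}(\mu_X,\mu_Y)$.
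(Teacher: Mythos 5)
Your proposal is correct and follows essentially the same route as the paper's proof: expand the square into three quadruple sums, absorb the redundant indicator in each using $d^X\otimes I^X=d^X$ and $d^Y\otimes I^Y=d^Y$, and rewrite each piece as $\langle\mu,A\mu B\rangle$ via the symmetry of $d^X,I^X,d^Y,I^Y$. Your explicit caution about which indicator survives in $S_1$ and $S_3$, and about the $k\leftrightarrow l$ relabelling, matches exactly the bookkeeping carried out in the paper's Appendix D derivation.
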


We prove Proposition \ref{prp:regularization} in Appendix \ref{app:approx} by simply expanding the equation. 
Using this Theorem and with the same argument as \cite{benamou2015iterative, solomon2016entropic}, we can compute the transportation plan ($\mu$) by solving
\begin{equation}\label{eq:kl-equiv-problem}
    \mu = \underset{\mu \in \mathcal{M}(\mu_X, \mu_Y)}{\text{argmin}} \text{KL} (\mu, e^{\frac{-\Lambda(\mu)}{\epsilon}}).
\end{equation}
To solve \eqref{eq:kl-equiv-problem}, similar to \cite{benamou2015iterative, solomon2016entropic} we can use the following iterations
\begin{equation}\label{eq:kl-iterations}
    \mu^{(t+1)} = \underset{\mu \in \mathcal{M}(\mu_X, \mu_Y)}{\text{argmin}} \text{KL} (\mu, \left[e^{\frac{-\Lambda(\mu^{(t)})}{\epsilon}}\right]^{\wedge \eta} \otimes \left[ \mu^{(t)} \right]^{\wedge(1-\eta)}),
\end{equation}
where $0 < \eta \le 1$ is called the convergence parameter. Note that the number of iterations of \eqref{eq:kl-iterations} needed for convergence increases as $\eta \rightarrow 0$, but the iterations might not converge for high enough values of $\eta$. 
Pseudocode for this method is provided in Algorithm \ref{alg:main-method}. As this algorithm performs a sinkhorn projection in each iteration, using \cite[Remark 4.6]{peyre2019computational}, the overall time complexity is $\mathcal{O}(tn^2\log(n)\epsilon^{-3})$,  where $n$ is the number of points, $t$ is the number of iterations, and $\epsilon$ is the regularization parameter.

\begin{algorithm}
    \caption{Pseudocode for approximating JGW solver }
    \label{alg:main-method}
        \hspace*{\algorithmicindent} \textbf{Input} convergence parameter $\eta \in (0, 1]$, regularization parameter $\epsilon \in \mathbb{R}_{\ge 0}$, maximum number of iterations $T \in \mathbb{N}$ and two embedded distribution of mm-spaces $\mathbf{X}, \mathbf{Y}$ given by distance matrices $d^X \in \mathbb{R}^{n_X \times n_X}, d^Y \in \mathbb{R}^{n_Y \times n_Y}$, cluster matrices $I^X \in \mathbb{R}^{n_X \times n_X}, I^Y \in \mathbb{R}^{n_Y \times n_Y}$, and marginal vectors $\mu_{X} \in \mathbb{R}^{n_X}, \mu_{Y} \in \mathbb{R}^{n_Y}$
    \begin{algorithmic}[1]
        \State $\mu := \mu_X \times \mu_Y^T$
        \For {$t$ in $1,2,\dots,T$}
        \State $\Lambda = d^{X\wedge2} \mu I^Y  - 2  d^X \mu d^Y + I^X \mu d^{Y\wedge2}$
        \State $K = \left[e^{\frac{-\Lambda}{\epsilon}}\right]^{\wedge \eta} \otimes \left[ \mu \right]^{\wedge(1-\eta)}$
        \State $\mu = \text{SINKHORN-PROJECTION}(K, \mu_X, \mu_Y)$
        \EndFor
        \State \Return $\mu$
    \end{algorithmic}
\end{algorithm}

\section*{Experiments and Results}

For all the following experiments, the computational times reported were obtained on the same machine with a 12th Gen Intel(R) Core(TM) i5-1240P (1.70 GHz) CPU and 16.0 GB of RAM. None of the following experiments uses a GPU. All of the code for this paper is implemented in Python 3.10. and is available in this \href{https://github.com/artajmir3/Joint-Gromov-Wasserstein}{repository}.

\subsection*{General Evaluation on Partial Matching}\label{sec:JGW-partial} 
To evaluate the performance of the JGW objective, we first conducted a partial matching experiment by comparing JGW with other recent Gromov-Wasserstein variants involving two unbalanced measurements $p,q \quad (1 = |p| > |q|)$, including \emph{the mass-constrained Partial Gromov-Wasserstein distance} (mPGW) \cite{chapel2020partial}, \emph{the Partial Gromov-Wasserstein distance} (PGW) \cite{bai2024efficient}, \emph{and the Unbalanced Gromov-Wasserstein distance} (UGW) \cite{sejourne2021unbalanced}. Although JGW originally operates on data from all clusters, we adapted it for this specific scenario by employing dummy clusters through the construction $\mathbf{Y} = \{p\}$ and $\mathbf{X} = \{X_1, X_2\}$ where $X_1$ is $q/|q|$ with mass $s_{X_1} = |q|$ and $X_2$ is a single point distribution with mass $s_{X_2} = 1 - |q|$. Using the formulation of \eqref{eq:jgw-definition}, we notice that the formulation of mPGW \cite{chapel2020partial} is equivalent to this special case of JGW. However, the computation method suggested by its authors differs from the approximation we used for our computations.

We generated the source distribution ($q$) by sampling $200$ points from an Archimedean spiral with added noise, while the target distribution ($p$) combines 200 points sampled using the same method, but with 100 additional points drawn from a standard normal distribution (see Figure \ref{fig:jgw-partial-coupling}\textbf{a}). We then applied mPGW\cite{chapel2020partial}, PGW\cite{bai2024efficient}, UGW\cite{sejourne2021unbalanced}, and our proposed JGW method to compute the optimal coupling between $p$ and $q$, with results visualized in Figure \ref{fig:jgw-partial-coupling}\textbf{b}. The results reveal a significant performance difference in structural preservation. Both mPGW and PGW exhibit substantial difficulty in distinguishing spiral structure from added noise, incorrectly transporting nearly half ($49\%$) of the total mass to noise points. UGW demonstrates some improvement over these methods but still suffers from significant misattribution, transporting $33\%$ of mass toward noise points, and failing to fully capture the spiral's structural coherence. In contrast, JGW achieves superior performance by effectively separating the true spiral structure from noise contamination, transporting only $0.9\%$ of mass to noise points while preserving the geometric integrity of the spiral pattern. Although we acknowledge that mPGW and PGW results depend critically on the initial coupling of their algorithms, our search across $50$ randomly generated transportation plans, combined with the authors' default initialization strategy, failed to yield satisfactory couplings, suggesting fundamental limitations in these methods' ability to handle partial matching tasks.

\subsection*{Sparcity of Transport Plans}
A critical consideration in shape-matching and other continuous-data applications is the quality of the coupling matrix, as accurate one-to-one correspondences for geometric analysis require sparse transportation matrices. Since mPGW and PGW naturally yield one-to-one matchings, we focus here on UGW and JGW, which produce diffuse transportation patterns due to their regularization terms. To examine this, we revisited the previous experiment and visualized the transportation edges from the leftmost point in $q$ (shown in purple in Figure \ref{fig:jgw-partial-quality}\textbf{a}), revealing that JGW achieves slightly better sparsity than UGW. To quantify this difference, we computed the variance of transported mass from each source point in $q$ for both methods.
Since both sparsity and runtime are heavily influenced by the regularization parameter, we reported the runtime and average per-point variance across regularization values ranging from $0.1$ to $1$ for JGW and $0.005$ to $0.2$ for UGW, with the resulting trade-off shown in Figure \ref{fig:jgw-partial-quality}\textbf{b}. JGW yields approximately a $25\times$ speedup over UGW at comparable variance levels, and roughly a $6\times$ improvement in variance at comparable runtimes. To verify robustness, Figure \ref{fig:jgw-partial-quality}\textbf{c} reports the matching error for each method under the regularization parameters used previously. Regarding runtime, mPGW and PGW complete in $0.49$ and $0.48$ seconds, respectively, benefiting from their direct optimization approach, while the regularized methods require substantially more time.
In summary, Figures \ref{fig:jgw-partial-coupling} and \ref{fig:jgw-partial-quality} demonstrate that JGW produces couplings most consistent with structural expectations, outperforming all competing methods in preserving meaningful geometric correspondences and achieving a superior sparsity-runtime trade-off compared to UGW. While mPGW and PGW offer computational efficiency and one-to-one couplings, these advantages come at the high cost of failing to preserve the underlying geometric structure.

\subsection*{Applications of JGW in Shape Matching}
To analyze the performance of JGW in applications related to shape matching, we designed experiments involving 2D and 3D shape data. First, we used a typeset illustrating of three letters ``A", ``B" and ``C" (see Figure \ref{fig:jgw-shape} \textbf{a} source) to build a distribution of mm-spaces with 3 clusters and use as the source space. For the target space, we used a different typeface and created one cluster with the illustration of the word ``ABC" (see Figure \ref{fig:jgw-shape} \textbf{a} target). 
In Figure \ref{fig:jgw-shape} \textbf{a}, we color in the right panel "result" each point in the target distribution based on the cluster from the source space, that has its corresponding coupled point in the JGW transportation plan. JGW manages to transport 98.6$\%$ of the mass correctly, by using $450$ points to represent the data. This experiment took $\approx24$ seconds to run.

To test the performance of JGW on 3D data, we next used two 3D meshes from the CAPOD dataset \cite{papadakis2014canonically} of a human in different poses. We split the first mesh into 5 clusters, namely, upper body, left arm, right arm, left leg, and right leg (see Figure \ref{fig:jgw-shape} \textbf{b}). We used the mesh vertices as our distribution points and applied the same method to find correspondence and colorized matched points of each cluster in one color. As Figure \ref{fig:jgw-shape} \textbf{b} suggests, JGW does a perfect job in distinguishing the arms and the body, although it confuses some parts of the legs. In total more than $80\%$ of the mass is transported correctly, and it took $50$ seconds to perform this experiment. We acknowledge that due to symmetry in this particular example, it's possible to get the same result with the substitution of left and right arm/leg, depending on the initialization of our optimization process. 

Finally, 
we evaluated our method on the SHREC'16 cuts dataset~\cite{cosmo2016shrec}, which uses shapes from TOSCA~\cite{bronstein2008numerical} and provides partial versions with different cuts (see Figure \ref{fig:jgw-shape} \textbf{c}). Since the SHREC'16 dataset provides only one partial cluster per shape, we computed the complement component as the second cluster to make it suitable for our method. Figure \ref{fig:jgw-shape} \textbf{c} shows the colorized correspondence diagram, demonstrating near-perfect mass transportation.
To quantitatively evaluate the mapping quality, we employed a standard measure commonly used in shape matching~\cite{kim2011blended, wu2023multi, ehm2024geometrically}, that is the geodesic distance between ground truth and computed corresponding points, normalized by the square root of the full shape's area. Figure \ref{fig:jgw-shape} \textbf{c} (Correspondence quality) presents a cumulative distribution function (CDF) of this measure across all mesh vertices. The fact that the CDF reaches $100\%$ at a geodesic error of $0.0001$ suggests that this method can match complex shapes with excellent accuracy. For this experiment we modeled the cat body with $10^4$ points and it took $\approx3000$ seconds to run this experiment.

Overall, these experiments show the potential of JGW in shape matching problems.


\subsection*{Alignment of Biomolecular Complexes}\label{sec:JGW-cryo} 
As mentioned in the Introduction, one of the potential applications of our method is the alignment and fitting of biomolecules from 3D density maps obtained from Cryogenic Electron Microscopy (cryo-EM) \cite{riahi2023alignot,riahi2025alignment}. 
To investigate how our framework performs in this context, we focus on the model-building task, in which a density map (a large 3D voxelized array) and the atomic structures of its submodule are given, and the goal is to find the optimal fitting of each submodule within the full map \cite{riahi2025alignment}. This setting entirely matches our problem formulation and assumptions. We compare our proposed joint alignment method against EMPOT \cite{riahi2025alignment}, which performs one-by-one submodule alignment using Unbalanced Gromov–Wasserstein (UGW)\cite{sejourne2021unbalanced}. Experiments are conducted on a dataset of 43 atomic models obtained from \cite{he2022model}.
For each model, we partition the structure into its constituent chains, with the number of chains ranging from 2 to 10. Figure \ref{fig:bio}\textbf{a} illustrates one such model (PDB:1I3Q \cite{cramer2001structural}), with each chain rendered in a distinct color. Each model is converted into a density map, from which 1,000 points are sampled in total. We then apply both UGW and JGW to the resulting point clouds. Figure \ref{fig:bio}\textbf{b} displays, for PDB:1I3Q, the aligned position of each chain produced by each method (shown in red) alongside its ground-truth position (shown in blue), as well as the full reconstructed model. As the figure demonstrates, JGW achieves a near-perfect alignment across all chains, whereas UGW produces substantially incorrect alignments. We perform this experiment across the full dataset and evaluate performance using the root mean square deviation (RMSD) of the atomic models after applying each method's alignment. Figure \ref{fig:bio}\textbf{c} presents the results as a scatter plot, suggesting a significant performance improvement of JGW over UGW across the whole dataset.
Out of 43 models, JGW achieves significantly better reconstruction for 38 and comparable results for the remaining 5.
Although the runtime of both methods depends heavily on the number of iterations, random initializations, and points sampled, we observe a $7\times$ speedup when replacing UGW with JGW under identical hyperparameters. 
This improvement arises because JGW solves a single alignment subproblem instead of multiple one-by-one subproblems, and its approximation is slightly faster to compute than that of UGW~\cite{sejourne2021unbalanced}.

\section*{Conclusion}
In this paper, we formulate a novel variant of the Gromov-Wasserstein distance specifically designed to calculate a dissimilarity measure between two collections of mm-spaces, which we call the \emph{Joint Gromov-Wasserstein} (JGW) objective. We prove several theoretical properties of this new variant and analyze its behavior by showing useful results in the partial isomorphism and point sampling scenarios. Furthermore, we propose a method to adapt existing algorithms designed for computing the entropic regularized Gromov-Wasserstein distance to approximate the solution of our formulation in practice. Extensive experiments on partial matching, shape matching, and cryo-EM density map alignment tasks suggest that JGW is applicable to a wide range of problems, significantly outperforming classical partial and unbalanced variants with particularly strong performance when matching multiple distributions. In particular, we suggest JGW as an effective alignment method for structural biology and atomic model-building applications, where multiple chains must be matched.

Among the potential directions for improving our method, we first mention that the choice of approximation algorithm significantly affects the quality of the transportation map, which is crucial for applications involving continuous data types such as shape matching and cryo-EM density map alignment. Our formulation of JGW can adapt most existing approximations and relaxations developed for the original GW distance using a similar approach, including the methods introduced in \cite{li2023convergent, chapel2020partial} and one of the three linear lower bounds in \cite{memoli2011gromov}. In the context of cryo-EM, while our results suggest that the present approach is suitable for model-building applications, further validation on additional structures with different approximation methods is needed. Finally, in a more practical scenario, it would be important to test the method's ability to handle heterogeneous alignments by aligning a source map with separately produced parts.

\section*{Acknowledgments}
This research is supported by a NSERC Discovery Grant RGPIN-2020-05348.

\bibliographystyle{plos2015}
\bibliography{main}

@String(TOG= {ACM Trans. Graph.})

@String(TOG   = {ACM TOG})

@inproceedings{xu2019gromov,
  title={Gromov-{W}asserstein learning for graph matching and node embedding},
  author={Xu, Hongteng and Luo, Dixin and Zha, Hongyuan and Duke, Lawrence Carin},
  booktitle={International conference on machine learning},
  pages={6932--6941},
  year={2019},
  organization={PMLR}
}

@inproceedings{chowdhury2021generalized,
  title={Generalized spectral clustering via Gromov-Wasserstein learning},
  author={Chowdhury, Samir and Needham, Tom},
  booktitle={International Conference on Artificial Intelligence and Statistics},
  pages={712--720},
  year={2021},
  organization={PMLR}
}

@article{li2023convergent,
  title={A convergent single-loop algorithm for relaxation of {G}romov-{W}asserstein in graph data},
  author={Li, Jiajin and Tang, Jianheng and Kong, Lemin and Liu, Huikang and Li, Jia and So, Anthony Man-Cho and Blanchet, Jose},
  journal={arXiv preprint arXiv:2303.06595},
  year={2023}
}

@article{riahi2025alignment,
  title={Alignment of Partially Overlapping Cryo-{EM} Maps Using Unbalanced {G}romov-{W}asserstein Divergence},
  author={Tajmir Riahi, Aryan and Zhang, Chenwei and Condon, Anne and Chen, James and Dao Duc, Khanh},
  journal={PRX Life},
  volume={3},
  number={2},
  pages={023003},
  year={2025},
  publisher={APS}
}

@article{memoli2011gromov,
  title={Gromov--{W}asserstein distances and the metric approach to object matching},
  author={M{\'e}moli, Facundo},
  journal={Foundations of computational mathematics},
  volume={11},
  pages={417--487},
  year={2011},
  publisher={Springer}
}

@article{sejourne2021unbalanced,
  title={The unbalanced {G}romov {W}asserstein distance: Conic formulation and relaxation},
  author={S{\'e}journ{\'e}, Thibault and Vialard, Fran{\c{c}}ois-Xavier and Peyr{\'e}, Gabriel},
  journal={Advances in Neural Information Processing Systems},
  volume={34},
  pages={8766--8779},
  year={2021}
}

@article{chapel2020partial,
  title={Partial optimal tranport with applications on positive-unlabeled learning},
  author={Chapel, Laetitia and Alaya, Mokhtar Z and Gasso, Gilles},
  journal={Advances in Neural Information Processing Systems},
  volume={33},
  pages={2903--2913},
  year={2020}
}

@article{bai2024efficient,
  title={Efficient Solvers for Partial {G}romov-{W}asserstein},
  author={Bai, Yikun and Martin, Rocio Diaz and Du, Hengrong and Shahbazi, Ashkan and Kolouri, Soheil},
  journal={arXiv preprint arXiv:2402.03664},
  year={2024}
}

@inproceedings{grave2019unsupervised,
  title={Unsupervised alignment of embeddings with wasserstein procrustes},
  author={Grave, Edouard and Joulin, Armand and Berthet, Quentin},
  booktitle={The 22nd International Conference on Artificial Intelligence and Statistics},
  pages={1880--1890},
  year={2019},
  organization={PMLR}
}

@article{peyre2019computational,
  title={Computational optimal transport: With applications to data science},
  author={Peyr{\'e}, Gabriel and Cuturi, Marco and others},
  journal={Foundations and Trends{\textregistered} in Machine Learning},
  volume={11},
  number={5-6},
  pages={355--607},
  year={2019},
  publisher={Now Publishers, Inc.}
}

@article{riahi2023alignot,
  title={{A}lign{OT}: An optimal transport based algorithm for fast 3{D} alignment with applications to cryogenic electron microscopy density maps},
  author={Tajmir Riahi, Aryan and Woollard, Geoffrey and Poitevin, Fr{\'e}d{\'e}ric and Condon, Anne and Dao Duc, Khanh},
  journal={IEEE/ACM Transactions on Computational Biology and Bioinformatics},
  year={2023},
  publisher={IEEE}
}

@article{singer2024alignment,
  title={Alignment of density maps in {W}asserstein distance},
  author={Singer, Amit and Yang, Ruiyi},
  journal={Biological Imaging},
  volume={4},
  pages={e5},
  year={2024},
  publisher={Cambridge University Press}
}

@article{he2022model,
  title={Model building of protein complexes from intermediate-resolution cryo-EM maps with deep learning-guided automatic assembly},
  author={He, Jiahua and Lin, Peicong and Chen, Ji and Cao, Hong and Huang, Sheng-You},
  journal={Nature Communications},
  volume={13},
  number={1},
  pages={4066},
  year={2022},
  publisher={Nature Publishing Group UK London}
}

@article{solomon2016entropic,
  title={Entropic metric alignment for correspondence problems},
  author={Solomon, Justin and Peyr{\'e}, Gabriel and Kim, Vladimir G and Sra, Suvrit},
  journal={ACM Transactions on Graphics (ToG)},
  volume={35},
  number={4},
  pages={1--13},
  year={2016},
  publisher={ACM New York, NY, USA}
}

@article{benamou2015iterative,
  title={Iterative {B}regman projections for regularized transportation problems},
  author={Benamou, Jean-David and Carlier, Guillaume and Cuturi, Marco and Nenna, Luca and Peyr{\'e}, Gabriel},
  journal={SIAM Journal on Scientific Computing},
  volume={37},
  number={2},
  pages={A1111--A1138},
  year={2015},
  publisher={SIAM}
}

@article{cramer2001structural,
  title={Structural basis of transcription: {RNA} polymerase {II} at 2.8 {A}ngstrom resolution},
  author={Cramer, Patrick and Bushnell, David A and Kornberg, Roger D},
  journal={science},
  volume={292},
  number={5523},
  pages={1863--1876},
  year={2001},
  publisher={American Association for the Advancement of Science}
}

@inproceedings{papadakis2014canonically,
  title={The canonically posed 3{D} objects dataset},
  author={Papadakis, Panagiotis},
  booktitle={Eurographics Workshop on 3D Object Retrieval},
  pages={33--36},
  year={2014}
}

@book{villani2021topics,
  title={Topics in optimal transportation},
  author={Villani, C{\'e}dric},
  volume={58},
  year={2021},
  publisher={American Mathematical Soc.}
}

@inproceedings{zhang2015multiple,
  title={Multiple anonymized social networks alignment},
  author={Zhang, Jiawei and Philip, S Yu},
  booktitle={2015 IEEE International Conference on Data Mining},
  pages={599--608},
  year={2015},
  organization={IEEE}
}

@inproceedings{roetzer2024spidermatch,
  title={Spidermatch: 3{D} shape matching with global optimality and geometric consistency},
  author={Roetzer, Paul and Bernard, Florian},
  booktitle={Proceedings of the IEEE/CVF Conference on Computer Vision and Pattern Recognition},
  pages={14543--14553},
  year={2024}
}

@inproceedings{ehm2024geometrically,
  title={Geometrically consistent partial shape matching},
  author={Ehm, Viktoria and Roetzer, Paul and Eisenberger, Marvin and Gao, Maolin and Bernard, Florian and Cremers, Daniel},
  booktitle={2024 International Conference on 3D Vision (3DV)},
  pages={914--922},
  year={2024},
  organization={IEEE Computer Society}
}

@article{neugebauer1997reconstruction,
  title={Reconstruction of real-world objects via simultaneous registration and robust combination of multiple range images},
  author={Neugebauer, Peter J},
  journal={International journal of shape modeling},
  volume={3},
  number={01n02},
  pages={71--90},
  year={1997},
  publisher={World Scientific}
}

@inproceedings{domokos2010affine,
  title={Affine puzzle: Realigning deformed object fragments without correspondences},
  author={Domokos, Csaba and Kato, Zoltan},
  booktitle={European Conference on Computer Vision},
  pages={777--790},
  year={2010},
  organization={Springer}
}

@article{litany2020non,
  title={Non-Rigid Puzzles},
  author={Litany, Or and Rodol{\`a}, Emanuele and Bronstein, Alex and Bronstein, Michael and Cremers, Daniel},
  journal={arXiv preprint arXiv:2011.13076},
  year={2020}
}

@article{ovsjanikov2012functional,
  title={Functional maps: a flexible representation of maps between shapes},
  author={Ovsjanikov, Maks and Ben-Chen, Mirela and Solomon, Justin and Butscher, Adrian and Guibas, Leonidas},
  journal={ACM Transactions on Graphics (ToG)},
  volume={31},
  number={4},
  pages={1--11},
  year={2012},
  publisher={ACM New York, NY, USA}
}

@article{cao2023unsupervised,
  title={Unsupervised learning of robust spectral shape matching},
  author={Cao, Dongliang and Roetzer, Paul and Bernard, Florian},
  journal={arXiv preprint arXiv:2304.14419},
  year={2023}
}

@article{sahilliouglu2020recent,
  title={Recent advances in shape correspondence},
  author={Sahillio{\u{g}}lu, Yusuf},
  journal={The Visual Computer},
  volume={36},
  number={8},
  pages={1705--1721},
  year={2020},
  publisher={Springer}
}

@inproceedings{rodola2017partial,
  title={Partial functional correspondence},
  author={Rodol{\`a}, Emanuele and Cosmo, Luca and Bronstein, Michael M and Torsello, Andrea and Cremers, Daniel},
  booktitle={Computer graphics forum},
  volume={36},
  number={1},
  pages={222--236},
  year={2017},
  organization={Wiley Online Library}
}

@inproceedings{cosmo2016shrec,
  title={{SHREC}'16: {P}artial matching of deformable shapes},
  author={Cosmo, Luca and Rodola, Emanuele and Bronstein, Michael M and Torsello, Andrea and Cremers, Daniel and Sahillioǧlu, Y and others},
  booktitle={Eurographics Workshop on 3D Object Retrieval, EG 3DOR},
  pages={61--67},
  year={2016},
  organization={Eurographics Association}
}

@inproceedings{litany2017deep,
  title={Deep functional maps: Structured prediction for dense shape correspondence},
  author={Litany, Or and Remez, Tal and Rodola, Emanuele and Bronstein, Alex and Bronstein, Michael},
  booktitle={Proceedings of the IEEE international conference on computer vision},
  pages={5659--5667},
  year={2017}
}

@inproceedings{cho2013learning,
  title={Learning graphs to match},
  author={Cho, Minsu and Alahari, Karteek and Ponce, Jean},
  booktitle={Proceedings of the IEEE international conference on computer vision},
  pages={25--32},
  year={2013}
}

@inproceedings{zanfir2018deep,
  title={Deep learning of graph matching},
  author={Zanfir, Andrei and Sminchisescu, Cristian},
  booktitle={Proceedings of the IEEE conference on computer vision and pattern recognition},
  pages={2684--2693},
  year={2018}
}

@article{yang2020teaser,
  title={Teaser: Fast and certifiable point cloud registration},
  author={Yang, Heng and Shi, Jingnan and Carlone, Luca},
  journal={IEEE Transactions on Robotics},
  volume={37},
  number={2},
  pages={314--333},
  year={2020},
  publisher={IEEE}
}

@inproceedings{huang2021predator,
  title={Predator: Registration of 3{D} point clouds with low overlap},
  author={Huang, Shengyu and Gojcic, Zan and Usvyatsov, Mikhail and Wieser, Andreas and Schindler, Konrad},
  booktitle={Proceedings of the IEEE/CVF Conference on computer vision and pattern recognition},
  pages={4267--4276},
  year={2021}
}

@article{chizat2018unbalanced,
  title={Unbalanced optimal transport: Dynamic and Kantorovich formulations},
  author={Chizat, Lenaic and Peyr{\'e}, Gabriel and Schmitzer, Bernhard and Vialard, Fran{\c{c}}ois-Xavier},
  journal={Journal of Functional Analysis},
  volume={274},
  number={11},
  pages={3090--3123},
  year={2018},
  publisher={Elsevier}
}

@article{wu2023multi,
  title={Multi-part shape matching by simultaneous partial functional correspondence},
  author={Wu, Yan and Yang, Jun},
  journal={The Visual Computer},
  volume={39},
  number={1},
  pages={393--412},
  year={2023},
  publisher={Springer}
}

@article{bauer2024z,
  title={The {Z}-{G}romov-{W}asserstein distance},
  author={Bauer, Martin and M{\'e}moli, Facundo and Needham, Tom and Nishino, Mao},
  journal={arXiv preprint arXiv:2408.08233},
  year={2024}
}

@article{cuturi2013sinkhorn,
  title={Sinkhorn distances: {L}ightspeed computation of optimal transport},
  author={Cuturi, Marco},
  journal={Advances in neural information processing systems},
  volume={26},
  year={2013}
}

@book{bronstein2008numerical,
  title={Numerical geometry of non-rigid shapes},
  author={Bronstein, Alexander M and Bronstein, Michael M and Kimmel, Ron},
  year={2008},
  publisher={Springer Science \& Business Media}
}

@article{kim2011blended,
  title={Blended intrinsic maps},
  author={Kim, Vladimir G and Lipman, Yaron and Funkhouser, Thomas},
  journal={ACM transactions on graphics (TOG)},
  volume={30},
  number={4},
  pages={1--12},
  year={2011},
  publisher={ACM New York, NY, USA}
}

@book{bogachev2007measure,
  title={Measure theory},
  author={Bogachev, Vladimir I},
  year={2007},
  publisher={Springer}
}
\newpage

\section*{Appendix}
\appendix

\section{Proof of Theorem \ref{thm:embedding-unique}} \label{app:uniqueness}
\begin{lemma1}\label{lemma:projection}
    Given a distribution of mm-spaces $\mathbf{X} = (X_i, d_{X_i}, \mu_{X_i}, s_{X_i})_{i\in[k_X]}$ and an embedding $(X, d_X, \mu_X)$ with embedding functions $(\psi_{X_i})_{i\in[k_X]}$, there exist a bijective function $\pi_X: \bigcup_i X_i \rightarrow X$, such that $\pi_X(x_i) = \psi_{X_i}(x_i)$ for all $x_i \in X_i$. We call this map a projection.
\end{lemma1}
\begin{proof}
    Define $\pi_X$ by $\pi_X(x_i) = \psi_{X_i}(x_i)$ for all $x_i \in X_i$. By $(ii)$ property of Definition \ref{def:embedding}, we know that the images of $\psi_{X_i}$s are distinct, thus $\pi_X$ is injective. Also using $(iii)$ property of Definition \ref{def:embedding}, we conclude that $\pi_X$ is surjective as well.
\end{proof}

\begin{proof}[Proof of Theorem \ref{thm:embedding-unique}]
    
Assume $X_1$ and $X_2$ (respectively $Y_1$ and $Y_2$) are two distinct embeddings for $\mathbf{X}$ ($\mathbf{Y}$) with embedding functions $(\psi_{X_{1,i}})_{i \in [k_X]}$, $(\psi_{X_{2,i}})_{i \in [k_X]}$ ($(\psi_{Y_{1,i}})_{i \in [k_Y]}$, $(\psi_{Y_{2,i}})_{i \in [k_Y]}$). 
Using Lemma \ref{lemma:projection}, we define the bijections $\pi_{X_1}, \pi_{X_2}, \pi_{Y_1}, \pi_{Y_2}$. Now let $\pi^*_X: X_1 \rightarrow X_2$ and $\pi^*_Y: Y_1 \rightarrow Y_2$ be defined by
\begin{equation}
\pi^*_X = \pi_{X_2} \circ \pi_{X_1}^{-1}, \ \pi^*_Y = \pi_{Y_2} \circ \pi_{Y_1}^{-1}. \nonumber\end{equation}
Now for $x, x' \in X_1$ and $y,y' \in Y_1$ by using the properties of projections in Lemma \ref{lemma:projection}, we want to prove that
\begin{equation}\Gamma^*_p(x,y,x',y') = \Gamma^*_p(\pi^*_X(x),\pi^*_Y(y),\pi^*_X(x'),\pi^*_Y(y')).\nonumber\end{equation}
To do so, we distinguish the following cases:
\begin{itemize}
    \item CASE 1: \emph{there exist $i \in [k_X]$ and $j \in [k_Y]$ such that $(x,x',y,y')\in Im(\psi_{X_{1,i}})^2\times Im(\psi_{Y_{1,j}})^2$.} By \eqref{eq:def-gammastar}, we can write
    \begin{align}
    \Gamma^*_p(x,y,x',y') &= |d_{X_1}(x,x') - d_{Y_1}(y,y')|^p = |d_{X_i}(\pi_{X_1}^{-1}(x),\pi_{X_1}^{-1}(x')) - d_{Y_j}(\pi_{Y_1}^{-1}(y),\pi_{Y_1}^{-1}(y'))|^p \nonumber\\&= |d_{X_i}(\pi^*_X(x),\pi^*_X(x')) - d_{Y_j}(\pi^*_Y(y),\pi^*_Y(y'))|^p 
    \label{eq:app1-gammastar-part1}
    \end{align}
    Also it is straightforward to see that if $(x,x',y,y')\in Im(\psi_{X_{1,i}})^2\times Im(\psi_{Y_{1,j}})^2$, then $(\pi^*(x),\pi^*(x'),\pi^*(y),\pi^*(y'))\in Im(\psi_{X_{2,i}})^2\times Im(\psi_{Y_{2,j}})^2$. By combining this fact with \eqref{eq:app1-gammastar-part1} we conclude that
    \begin{align}
        \Gamma^*_p(x,y,x',y') &= |d_{X_i}(\pi^*_X(x),\pi^*_X(x')) - d_{Y_j}(\pi^*_Y(y),\pi^*_Y(y'))|^p = \Gamma^*_p(\pi^*_X(x),\pi^*_Y(y),\pi^*_X(x'),\pi^*_Y(y')). \nonumber
    \end{align}
    \item CASE 2: \emph{otherwise:} Without  loss of generality, we can assume $x \in Im(\psi_{X_{1,i}})$ and $x' \in Im(\psi_{X_{1,i'}})$ with $i \ne i'$. Then
    \begin{align}
        x \in Im(\psi_{X_{1,i}}) \Rightarrow \pi_{X_1}^{-1}(x) \in X_i \Rightarrow \pi^*(x) = \pi_{X_2} \circ \pi_{X_1}^{-1}(x) \in Im(\psi_{X_{2,i}}), \nonumber
    \end{align}
    \begin{align}
        x' \in Im(\psi_{X_{1,i'}}) \Rightarrow \pi_{X_1}^{-1}(x') \in X_i' \Rightarrow \pi^*(x') = \pi_{X_2} \circ \pi_{X_1}^{-1}(x') \in Im(\psi_{X_{2,i'}}).\nonumber
    \end{align}
    As a result there exists no $i \in [k_X]$, such that $(\pi^*(x), \pi^*(x')) \in Im(\psi_{X_{2,i}})$, therefore
    \begin{equation}
        \Gamma^*_p(x,y,x',y') = 0 =\Gamma^*_p(\pi^*_X(x),\pi^*_Y(y),\pi^*_X(x'),\pi^*_Y(y')).\nonumber
    \end{equation}
\end{itemize}

We finalize the proof by using \eqref{eq:gw-definition} and get
\begin{align*}
    \mathcal{GW}_{\Gamma_p^*,p}(X_1,Y_1) & = \underset{\mu \in \mathcal{M}(\mu_{X_1}, \mu_{Y_1})}{\text{inf}} \frac{1}{2} \biggl( \int_{X_1\times Y_1}\int_{X_1\times Y_1} \Gamma^*_p(x,y,x',y')\mu(dx\times dy)\mu(dx'\times dy')\biggr)^{1/p}
    \nonumber\\& = \underset{\mu \in \mathcal{M}(\mu_{X_1}, \mu_{Y_1})}{\text{inf}} \frac{1}{2} \biggl( \int_{X_1\times Y_1}\int_{X_1\times Y_1} \Gamma^*_p(\pi^*_X(x),\pi^*_Y(y),\pi^*_X(x'),\pi^*_Y(y')) \nonumber \\ & \hspace{220pt} \mu(dx\times dy)\mu(dx'\times dy')\biggr)^{1/p}.
    \nonumber
\end{align*}

To simplify this equation, we perform the change of variables $x^* = \pi^*_X(x), y^* = \pi^*_Y(y), x'^{*} = \pi^*_X(x') \text{ and } y'^* = \pi^*_Y(y')$ and we denote $\mu^*(.,.) = \mu(\pi^{*-1}_X(.), \pi^{*-1}_Y(.))$ the image of $\mu$ under the product mapping $(\pi^*_X, \pi^*_Y)$. 

Therefore, using \cite[Theorem 3.6.1]{bogachev2007measure}, we conclude that

\begin{align*}
    \mathcal{GW}_{\Gamma_p^*,p}(X_1,Y_1) & = \underset{\mu \in \mathcal{M}(\mu_{X_1}, \mu_{Y_1})}{\text{inf}} \frac{1}{2} \biggl( \int_{X_1\times Y_1}\int_{X_1\times Y_1} \Gamma^*_p(\pi^*_X(x),\pi^*_Y(y),\pi^*_X(x'),\pi^*_Y(y')) \nonumber \\ & \hspace{220pt} \mu(dx\times dy)\mu(dx'\times dy')\biggr)^{1/p}
    \nonumber\\& = \underset{\mu \in \mathcal{M}(\mu_{X_1}, \mu_{Y_1})}{\text{inf}} \frac{1}{2} \biggl( \int_{X_2\times Y_2}\int_{X_2\times Y_2} \Gamma^*_p(x^*,y^*,x'^*,y'^*)\mu^*(dx\times dy)\mu^*(dx'\times dy')\biggr)^{1/p}.
    \nonumber
\end{align*}

Using the properties of embedding functions, it's straightforward to check that
\begin{align*}
    \mu^* \in \mathcal{M}(\mu_{X_2}, \mu_{Y_2}) \iff 
    \mu \in \mathcal{M}(\mu_{X_1}, \mu_{Y_1}).
\end{align*}

Thus, we can change the domain of infimum and complete the proof as follows.

\begin{align*}
    \mathcal{GW}_{\Gamma_p^*,p}(X_1,Y_1) & =  \underset{\mu \in \mathcal{M}(\mu_{X_1}, \mu_{Y_1})}{\text{inf}} \frac{1}{2} \biggl( \int_{X_2\times Y_2}\int_{X_2\times Y_2} \Gamma^*_p(x^*,y^*,x'^*,y'^*)\mu^*(dx\times dy)\mu^*(dx'\times dy')\biggr)^{1/p}
    \nonumber\\&\quad\quad = \underset{\mu^* \in \mathcal{M}(\mu_{X_2}, \mu_{Y_2})}{\text{inf}} \frac{1}{2} \biggl( \int_{X_2\times Y_2}\int_{X_2\times Y_2} \Gamma^*_p(x^*,y^*,x'^*,y'^*)\mu^*(dx\times dy)\mu^*(dx'\times dy')\biggr)^{1/p}
    \nonumber\\&\quad \quad=\mathcal{GW}_{\Gamma_p^*,p}(X_2,Y_2).
\end{align*}


\end{proof}


    \section{Proof of Theorem \ref{thm:isomorphism}} \label{app:proofs}
\begin{lemma1}
    \label{lemma:lemmin}
    Given two distribution of mm-spaces $\mathbf{X}$ and $\mathbf{Y}$ with embbedings $X$ and $Y$ respectively, there exists a coupling $\mu^* \in \mathcal{M}(\mu_X, \mu_Y)$ such that
    \begin{equation}
        \nonumber
        \mathcal{JGW}_{p}(\mathbf{X},\mathbf{Y}) = \frac{1}{2} \left( \mathcal{D}(\mu^*)\right)^{1/p},
    \end{equation}
    where
    \begin{align}
    \label{eq:d-definition}
        &\mathcal{D}(\mu^*) = \int_{X\times Y}\int_{X\times Y} \quad\quad \Gamma^*_p(x,y,x',y')\mu^*(dx\times dy)\mu^*(dx'\times dy').
    \end{align}
\end{lemma1}

\begin{proof}
    For this we need to show the sequential compactness of $\mathcal{M}(\mu_X,\mu_Y)$ and the continuity of $\mathcal{D}$. The former is provided in \cite[p. 49]{villani2021topics} and the latter follows from \cite[Lemma 10.3]{memoli2011gromov}.
\end{proof}

\begin{proof}[Proof of Theorem \ref{thm:isomorphism}]
    Let $X$ and $Y$ be embeddings for $\mathbf{X}$ and $\mathbf{Y}$ with embedding functions $\psi^X_i$ and $\psi^Y_j$ respectively, and $\mathcal{D}$ be defined similarly as in \eqref{eq:d-definition}.
    
    For the ``if" part, we want to show that if $\mathbf{X}$ and $\mathbf{Y}$ are partially isomorphic, then there exists a coupling $\mu^* \in \mathcal{M}(\mu_X, \mu_Y)$ such that $\mathcal{D}(\mu^*) = 0$. By definition, there exist mm-spaces $Z_{i,j}$ with isometry functions $\psi^X_{i,j}: Z_{i,j} \rightarrow X_i, \psi^Y_{i,j}: Z_{i,j} \rightarrow Y_j$ satisfying the conditions specified in Definition \ref{def:partialisomorphism}.
    For simplicity, given $x \in X, y \in  Y$, we define 
    \begin{align}
    \nonumber
        &\mathcal{Z}(x,y) = \{\mu_{Z_{i,j}}(z) \  \vert \ \exists i \in [k_X], j\in [k_Y],  z\in Z_{i,j} \ \text{ s.t. } \psi^X_i \circ \psi^X_{i,j}(z) = x, \psi^Y_j \circ \psi^Y_{i,j}(z) = y\}.
    \end{align}
    Now using this operator define $\mu^*(x,y)$ as 
    \begin{equation}\nonumber
        \mu^*(x,y) = 
        \begin{cases}
            \sum_{p \in \mathcal{Z}(x,y)} p& \text{if } \mathcal{Z}(x,y) \ne \emptyset
            \\\
            0&\text{else}
        \end{cases}.
    \end{equation}
    Now consider $(x_1,y_1)$ and $(x_2,y_2)$ such that $\mu^*(x_i,y_i)\neq 0$ for $i=1,2$.
    Since $\mu^*(x_1,y_1) \ne 0$, $\mathcal{Z}(x_1,y_1) \ne \emptyset$ and there exists $z_1, i_1, j_1$ such that $\psi^X_{i_1} \circ \psi^X_{i_1,j_1}(z_1) = x_1, \psi^Y_{j_1} \circ \psi^Y_{i_1,j_1} (z_1) = y_1$. By the same argument, we similarly define $z_2, i_2, j_2$. If $i_1 \ne i_2$  or $j_1 \neq j_2$, since the images of $\psi^X_i$s are disjoint (due to the properties of embedding functions in Definition \ref{def:embedding}), then $\Gamma^*_p(x_1,y_1,x_2,y_2) =0$. If $i_1 = i_2$, and $j_1=j_2$, we have the isometries $\psi^X_{i_1}, \psi^X_{i_1,j_1}, \psi^Y_{j_1}, \psi^Y_{i_1,j_1}$, and we can write
    \begin{align}
        \nonumber
        d_{Z_{i_1,j_1}}(z_1,z_2) &= d_{X_{i_1}}(\psi^X_{i_1,j_1}(z_1),\psi^X_{i_1,j_1}(z_2)) \\
        \nonumber
        &=d_{X}(\psi^X_{i_1} \circ \psi^X_{i_1,j_1}(z_1),\psi^X_{i_1} \circ \psi^X_{i_1,j_1}(z_2))\\
        \nonumber
        &= d_X (x_1, x_2),\\
        \nonumber
        d_{Z_{i_1,j_1}}(z_1,z_2) &= d_{Y_{j_1}}(\psi^Y_{i_1,j_1}(z_1),\psi^Y_{i_1,j_1}(z_2)) \\ 
        \nonumber
        &=d_{Y}(\psi^Y_{j_1} \circ \psi^Y_{i_1,j_1}(z_1),\psi^Y_{j_1} \circ \psi^Y_{i_1,j_1}(z_2))\\
        \nonumber
        &= d_Y (y_1, y_2).
    \end{align}
    This implies $d_X(x_1,x_2) = d_Y(y_1, y_2)$, so we showed that $
        \Gamma^*_p(x_1,y_1,x_2,y_2) = 0
    $ and as a result $\mu^*$ satisfies  $\mathcal{D}(\mu^*) = 0$.
    
    
    For the other direction assume $\mathcal{JGW}_p(\mathbf{X}, \mathbf{Y}) = 0$. Using     Lemma \ref{lemma:lemmin}, there exists $\mu^*$ such that $\mathcal{D}(\mu^*) = 0$. 
    Now for each $i \in [k_X]$ and $j \in [k_Y]$ define $Z_{i,j}$ as the set of couplings 
    \begin{equation}
        Z_{i,j} = \{(x, y) \vert x \in X_i, y \in Y_j, \mu^*\left(\psi^X_i(x), \psi^Y_j(y)\right) \ne 0 \}, \nonumber
    \end{equation}
    endowed with a measure where we assign $\mu^*\left(\psi^X_i(x), \psi^Y_j(y)\right)$ to $(x,y)$ and normalize it. Now for $(x_1,y_1), (x_2,y_2) \in Z_{i,j}$, since $\mathcal{D}(\mu^*)=0$ and $\psi^X_i, \psi^Y_j$ are isometries, we have $d_{X_i}(x_1,x_2) = d_{Y_j}(y_1,y_2)$. 
    Thus $Z_{i,j}$ can be equipped with a metric function (either $d_{X_i}$ or $d_{Y_j}$) and forms a mm-space, such that 
    \begin{align*}
Z_{i,j} & \rightarrow X_i \\
\psi_{i,j}^X : (x,y)& \mapsto x
 \nonumber
    \end{align*}
    and 
     \begin{align*}
Z_{i,j} & \rightarrow Y_j \\
\psi_{i,j}^Y : (x,y)& \mapsto y
 \nonumber
    \end{align*} 
    are isometries. Using the assumption that $\mathcal{D}(\mu^*)= 0$ and the isomorphism properties of $\psi^X_i$ and  $\psi^Y_j$, we can further verify that $d_{X_i}$ and $d_{Y_J}$ satisfy the conditions for partial isomorphism, and thus
    $Z_{i,j}$ provides the desired partial isomorphism between $\mathbf{X}, \mathbf{Y}$.
    
\end{proof}

\section{Proof of Theorem \ref{thm:sampling}} \label{app:proof-samp}

\begin{proof}[Proof of Theorem \ref{thm:sampling}]
Let $X$ be an embedding of $\mathbf{X}$ with embedding functions $\psi_i: X_i \rightarrow X$, and $\{x_j\}_{j\in[n]}$ be  $n$ points in $\bigcup_{i \in [k_X]} X^n_i$. Let $X^n = \{\psi_i (x_{j}) \vert \forall j \in [n], x_j \in X_i\}$ endowed with the uniform empirical measure on it. As $\psi_i$s hold the properties of embedding functions, $X^n$ is an embedding for $\mathbf{X}^n$. Therefore using \eqref{eq:jgw-definition} we have
\begin{equation}\label{eq:step1}
    \mathcal{JGW}_p(\mathbf{X}, \mathbf{X}^n) = \mathcal{GW}_{\Gamma^*_p, p}(X, X^n).
\end{equation}

By definition of $\Gamma^*$, one can see that $\Gamma^*(x,y,x',y') \le \Gamma(x,y,x',y')$ for all $x,y,x',y'$. Hence,
\begin{equation}
\label{eq:step3}
\mathcal{GW}_{\Gamma^*_p, p}(X, X^n) \le \mathcal{GW}_{\Gamma_p, p}(X, X^n).    
\end{equation}
A similar statement for $\mathcal{GW}_{\Gamma_p,p}$ (\cite{memoli2011gromov}, Theorem 5.1e), shows that $\mathcal{GW}_{\Gamma_p, p}(X,X^n)$ almost surely converges to zero as $n\rightarrow \infty$. The combination of this Theorem with \eqref{eq:step1}, \eqref{eq:step3}, shows that $\mathcal{JGW}_p(\mathbf{X}, \mathbf{X}^n)$ almost surely converges to zero as $n \rightarrow \infty$.

\end{proof}

    \section{Proof of Proposition \ref{prp:regularization}}\label{app:approx}

\begin{proof}[Proof of Proposition \ref{prp:regularization}]
     
Using the definition of entropic Joint Gromov-Wasserstein objective ($\mathcal{JGW}^\epsilon(\mathbf{X}, \mathbf{Y})$), we can expand
\begin{align}
\sum_{i,j,k,l}\lvert d^X_{ij} - d^Y_{kl} \rvert^2 I^X_{ij}I^Y_{kl} \mu_{ik}\mu_{jl}&= \sum_{i,j,k,l} {d^X_{ij}}^2I^X_{ij}I^Y_{kl} \mu_{i,k}\mu_{j,l} - 2 \sum_{i,j,k,l}d^X_{i,j}d^Y_{k,l}I^X_{ij}I^Y_{kl}\mu_{ik}\mu_{jl}\nonumber\\&\quad \quad + \sum_{i,j,k,l}{d^Y_{k,l}}^2  I^X_{ij}I^Y_{kl} \mu_{i,k}\mu_{j,l} \nonumber\\
&=\sum_{i,k} \mu_{ik} \sum_{j,l} {d^X_{ij}}^2 \mu_{jl}I^Y_{lk} - 2 \sum_{i,k} \mu_{ik}\sum_{j,l}d^X_{ij}\mu_{jl}d^Y_{lk} \nonumber\\&\quad \quad + \sum_{i,k} \mu_{ik}\sum_{j,l}  I^X_{ij} \mu_{jl}{d^Y_{lk}}^2 \label{eq:simplification1} \\
&=\sum_{i,k} \mu_{ik} [d^{X\wedge2} \mu I^Y]_{ik} - 2 \sum_{i,k} \mu_{ik} [d^X \mu d^Y]_{ik}  \nonumber\\&\quad \quad + \sum_{i,k} \mu_{ik} [I^X \mu d^{Y\wedge2}]_{ik}\nonumber\\
&=\langle \mu, d^{X\wedge2} \mu I^Y \rangle - 2 \langle \mu,  d^X \mu d^Y\rangle + \langle \mu,  I^X \mu d^{Y\wedge2}\rangle \nonumber\\
&=\langle \mu, d^{X\wedge2} \mu I^Y  - 2  d^X \mu d^Y + I^X \mu d^{Y\wedge2}\rangle, \label{eq:jgw-entropic-simplification}
\end{align}
where $\langle .,.\rangle$ denotes the inner product of two given matrices and the superscript $\wedge2$ denotes the elementwise square of a matrix.
We used the fact that $d^X \otimes I^X = d^X$, and $d^Y \otimes I^Y = d^Y$ and $d^X, I^X, d^Y, I^Y$ are all symmetric in line \eqref{eq:simplification1}.
Combining \eqref{eq:jgw-entropic-simplification} with the definition of $\mathcal{JGW}^\epsilon(\mathbf{X}, \mathbf{Y})$, we then get
\begin{equation}
    \mathcal{JGW}^\epsilon(\mathbf{X}, \mathbf{Y}) = \underset{\mu \in \mathcal{M}(\mu_X, \mu_Y)}{\text{min}} \frac{1}{2} \left( \langle \mu, \Lambda(\mu) \rangle  + \epsilon H(\mu)\right)^{1/2},\nonumber
\end{equation}
where $\Lambda$ is defined as
\begin{equation}
    \Lambda(\mu) = d^{X\wedge2} \mu I^Y  - 2  d^X \mu d^Y + I^X \mu d^{Y\wedge2}.\nonumber
\end{equation}

\end{proof}   

\clearpage


\newpage

\clearpage

\section*{Figures}

\begin{figure}[h!]
    \centering
    \includegraphics[width=0.99\linewidth]{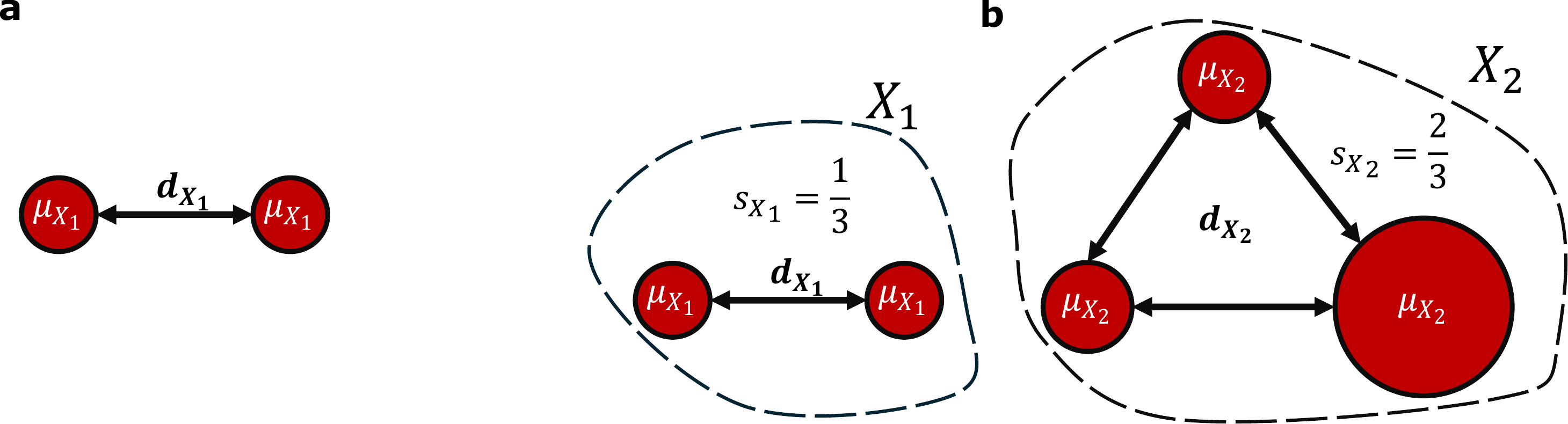}
    \caption{\textbf{a.} A simple example of a discrete mm-space with values of $d_x$ and $\mu_x$ provided. \textbf{b.} An example of a discrete distribution of mm-spaces containing two clusters with values of $d_x$, $\mu_x$, $s_x$ provided. Each point's size corresponds to the value of $\mu$ at that point.}
    \label{fig:example1}
\end{figure}\newpage

\begin{figure}[h!]
    \centering
    \includegraphics[width=0.99\linewidth]{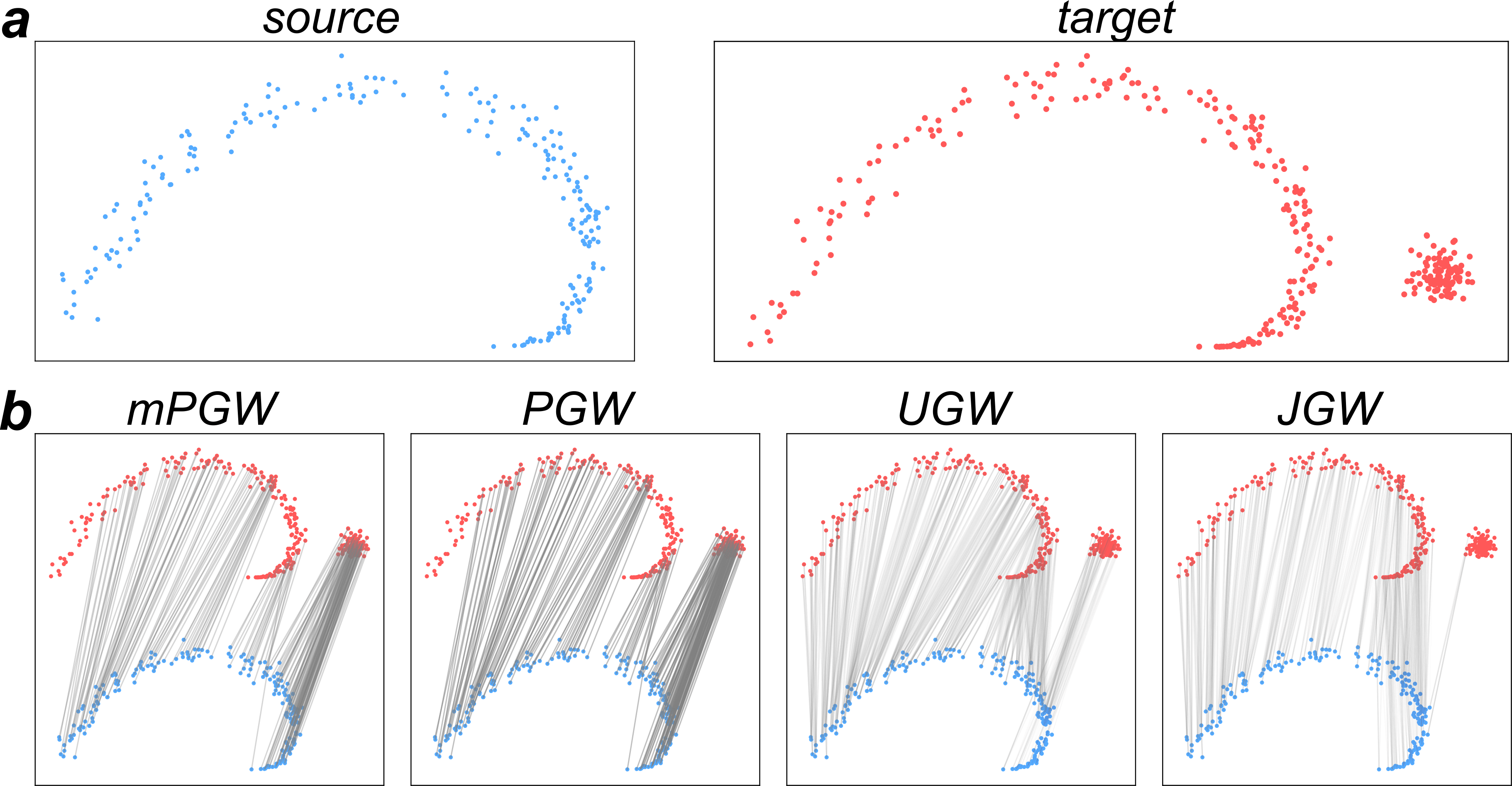}
    \caption{Performance comparison of GW variants for partial matching. We evaluate mPGW\cite{chapel2020partial}, PGW\cite{bai2024efficient}, UGW\cite{peyre2019computational}, and our proposed JGW approach. \textbf{a.} Source distribution (blue) comprising $200$ points sampled from an Archimedean spiral, and target distribution containing $200$ points from the same spiral plus $100$ noise points from a standard normal distribution (red). \textbf{b.} Couplings computed by each method, demonstrating JGW's superior performance in handling partial matches.}
    \label{fig:jgw-partial-coupling}
\end{figure}\newpage

\begin{figure}[h!]
    \centering
    \includegraphics[width=0.99\linewidth]{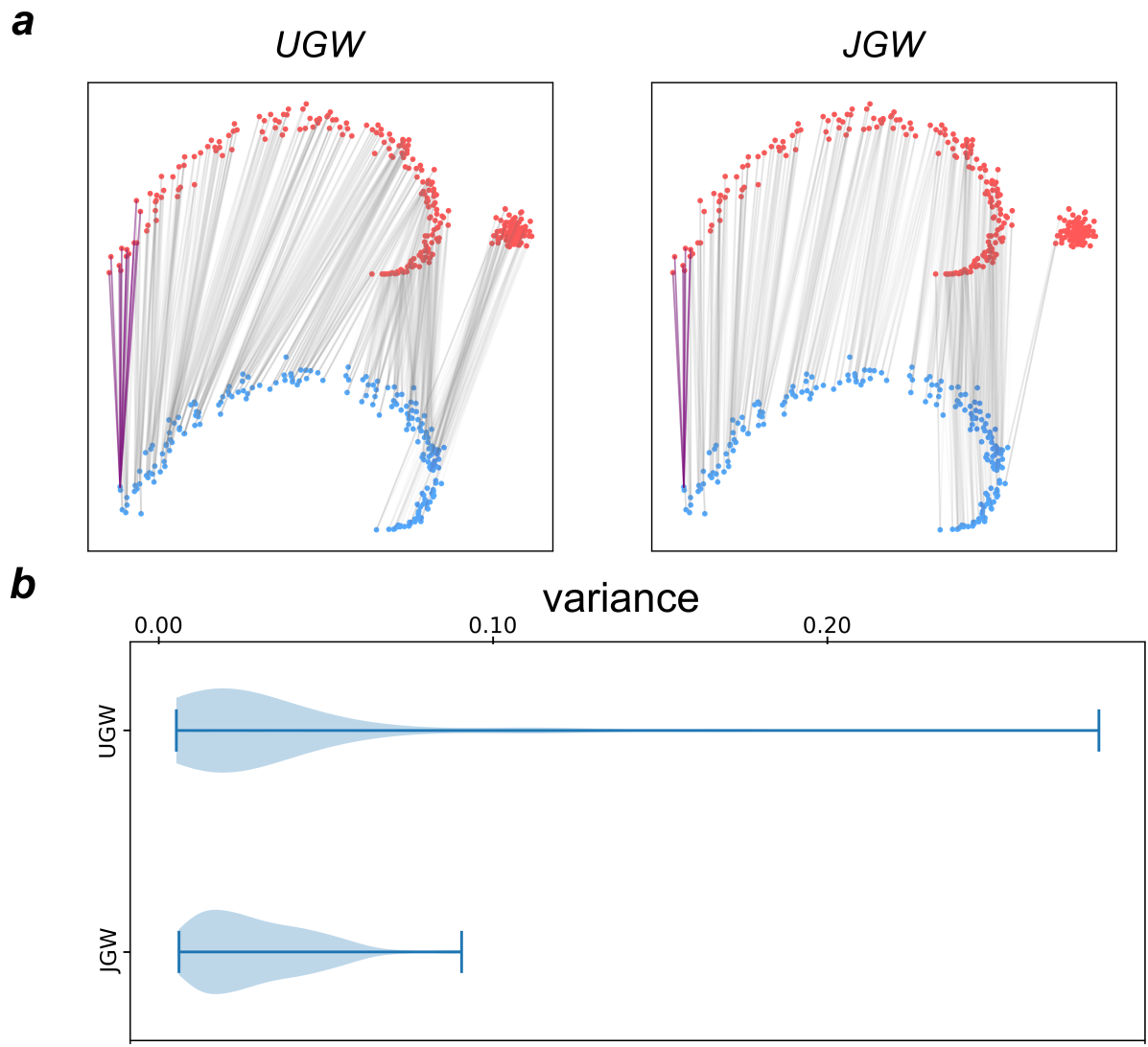}
    \caption{Comparison of the quality of the couplings generated by UGW and JGW on the same example as Figure \ref{fig:jgw-partial-coupling}. \textbf{a.} Couplings computed by each method, with visualization of how a single source point (the leftmost point in the source) is matched across the target distribution (purple edges). Both UGW and JGW distribute mass across multiple target points due to regularization, with JGW achieving lower variance. \textbf{b.} Runtime versus average per-point variance across regularization parameter values, confirming JGW's superior sparsity–runtime trade-off compared to UGW. \textbf{c.} Matching error across the regularization parameter ranges used previously, confirming the robustness of both methods within this range.
    }
    \label{fig:jgw-partial-quality}
\end{figure}\newpage

\begin{figure}[h!]
    \centering
    \includegraphics[width=0.99\linewidth]{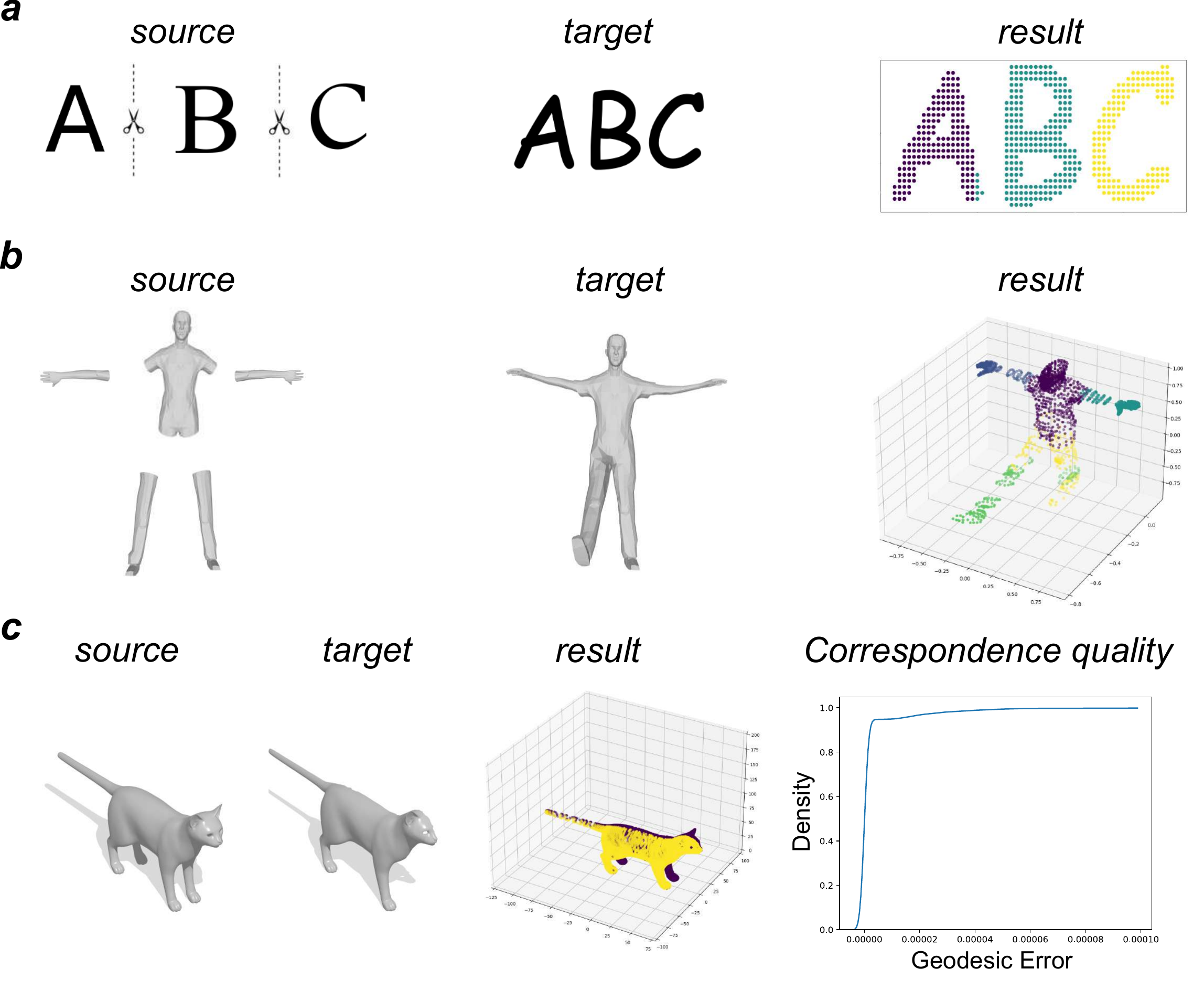}
    \caption{Performance of JGW in matching shapes involving 2D and 3D data. \textbf{a.} The source and target distributions created using different typesets and combinations of letters ``A", ``B", and ``C". Performance of JGW in matching the source space and target, each color shows the clusters of the coupled most points to a point of the target distribution.
    \textbf{b.} The source and target space created from 3D meshes of human body for CAPOD dataset \cite{papadakis2014canonically}. The results of the 3D experiments is demonstrated in the same way as before. This diagram shows the perfect performance of this method in matching the hands and the body, while mismatching some parts of the legs.
    \textbf{c.} The source and target space created from an example of SHREC'16 dataset \cite{cosmo2016shrec}. The results of the 3D experiments is demonstrated in the same way as before. This diagram shows the near-perfect performance of this method. To quantitatively evaluate the mapping quality, we employed a standard measure introduced in \cite{kim2011blended}: the geodesic distance between ground truth and computed corresponding points, normalized by the square root of the full shape's area, and illustrated a cumulative distribution function (CDF) of this measure across all mesh vertices.
    }
    \label{fig:jgw-shape}
\end{figure}\newpage

\begin{figure}[h!]
    \centering
    \includegraphics[width=0.99\linewidth]{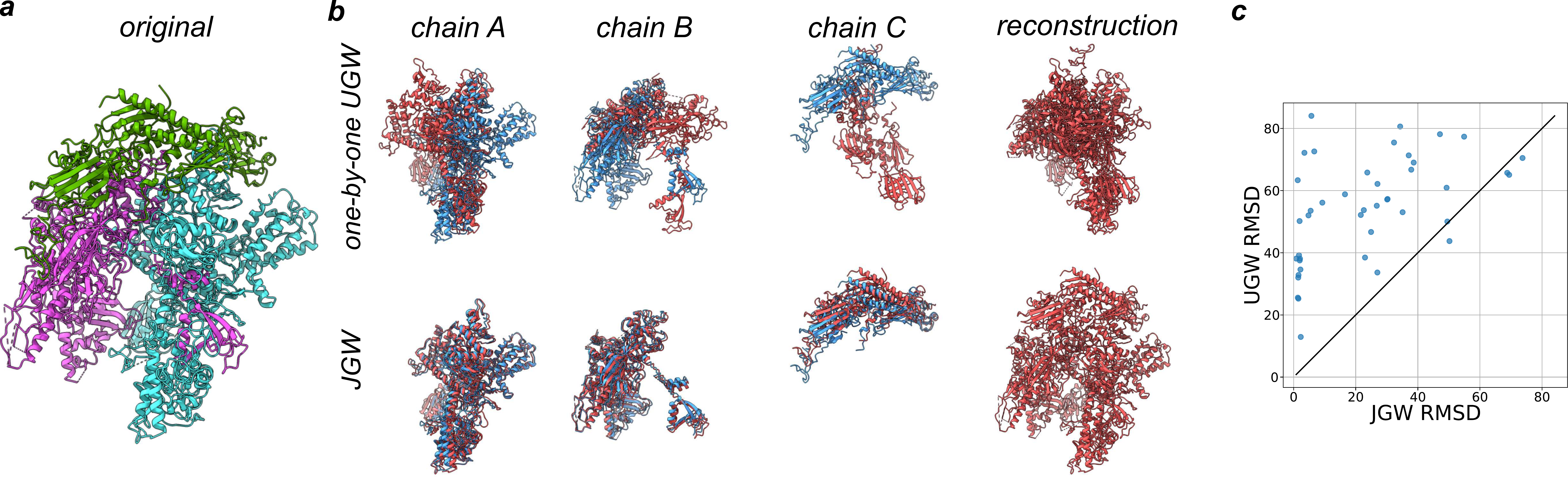}
    \caption{Performance of JGW on matching biomolecular complexes. \textbf{a.}Example of atomic structure (PDB:1I3Q \cite{cramer2001structural}) decomposed into 3 chains. \textbf{b.} The results of alignment of each chain and reconstruction using UGW (\cite{riahi2025alignment}) and one-by-one alignment of chains and JGW (\textit{ours}). In each diagram, the blue structure shows the ground truth while the red one represents the aligned one.
    \textbf{c.} Comparison of reconstruction accuracy (RMSD) between JGW and UGW across 43 protein models from \cite{he2022model}. Each point corresponds to one model, with its position indicating the RMSD achieved by each method.}
    \label{fig:bio}
\end{figure}\newpage
 \clearpage
 
\section*{Tables}

\begin{table}[h!]
    \centering
    \resizebox{\columnwidth}{!}{
    \begin{tabular}{ccccccc}
    \toprule
         metric & \multicolumn{2}{c}{Chain A} & \multicolumn{2}{c}{Chain B} & \multicolumn{2}{c}{Chain C} \\
          & JGW & UGW & JGW & UGW & JGW & UGW \\
         Rotational error & $\mathbf{5.1^\circ}$ & $30.4^\circ$ & $\mathbf{4.2^\circ}$   & $57.2^\circ$ & $\mathbf{8.9^\circ}$ & $161.9^\circ$\\
         RMSD & $\mathbf{3.018}$ & $25.974$ & $\mathbf{2.441}$  & $37.543$ & $\mathbf{8.275}$ & $63.858$\\
         \bottomrule
    \end{tabular}
    }
    \caption{Performance of JGW on matching biomolecular complexes compared to \cite{riahi2025alignment}. \textbf{a.} We used the atomic structure of PDB:1I3Q \cite{cramer2001structural} and simplified it into 3 chains. Then applied JGW and EMPOT to reconstruct it by aligning its chains into the whole map. We used 3 standard measurements to analyze the results of this experiment, and for each chain, highlighted the best result regarding each metric in bold. For all chains, significant improvement of all metrics is a consequence of a near-perfect alignment by JGW, as is illustrated in Figure \ref{fig:bio}.}
    \label{tab:bioresults}
\end{table}\newpage

\end{document}